\documentclass[12pt, a4paper]{article}

\usepackage[latin1]{inputenc}
\usepackage{pdfsync}
\usepackage{hyperref}
\usepackage{enumerate} 
\usepackage{multicol}
\usepackage{amsfonts, amsmath, amssymb, amsthm}
\usepackage[ruled]{algorithm2e}
\usepackage{graphicx}
\usepackage{xcolor}
\usepackage{subcaption} 
\usepackage{float}
\usepackage{multirow}
\usepackage{bbold}
\usepackage{geometry}
\geometry{right= 45mm, left=25mm, top=20mm, bottom=20mm} 
%
\numberwithin{equation}{section}
\theoremstyle{plain}
\newtheorem{thm}{Theorem}[section]
\newtheorem{coro}[thm]{Corollary}

\newtheorem{lem}[thm]{Lemma}

\theoremstyle{definition}

\theoremstyle{remark}
\newtheorem{rem}[thm]{Remark}

\newcommand{\R}{{\mathbb R}}

\newcommand{\Z}{{\mathbb Z}}

\newcommand{\be}{\begin{equation}}
\newcommand{\ee}{\end{equation}}
\newcommand{\calP}{{\cal P}}
\newcommand{\calM}{{\cal M}}
\newcommand{\eps}{{\varepsilon}}

\title{Genetic column generation: \\ Fast computation of high-dimensional multi-marginal optimal transport problems}

\author{\normalsize
Gero Friesecke\thanks{Faculty of Mathematics, Technische Universit\"at M\"unchen,
\texttt{gf@ma.tum.de}} \ \ \  Andreas S. Schulz\thanks{Operations Research, Technische Universit\"at M\"unchen,
\texttt{andreas.s.schulz@tum.de}} \ \ \  Daniela V\"{o}gler\thanks{Faculty of Mathematics, Technische Universit\"at M\"unchen,
\texttt{voegler@ma.tum.de}}
}
\date{}

\begin{document}
\maketitle
\vspace{-5mm}
\begin{abstract} We introduce a simple, accurate, and extremely efficient method for numerically solving the multi-marginal optimal transport (MMOT) problems arising in density functional theory. 
The method relies on (i) the sparsity of optimal plans [for $N$ marginals discretized by $\ell$ gridpoints each, general Kantorovich plans require $\ell^N$ gridpoints but the support of optimizers is of size $O(\ell\cdot N)$ \cite{FV18}], (ii) the method of column generation (CG) from discrete optimization which to our knowledge has not hitherto been used in MMOT, and (iii) ideas from machine learning. \textcolor{black}{The well-known bottleneck in CG consists in generating new candidate columns efficiently; we prove that in our context, finding the best new column is an NP-complete problem. 
To overcome this bottleneck} we use a genetic learning method tailormade for MMOT
in which the dual state within CG plays the role of an ``adversary'', in loose similarity to Wasserstein GANs. 
On a sequence of benchmark problems with up to 120 gridpoints and up to 30 marginals, our method always found the exact optimizers. Moreover, empirically the number of computational steps needed to find them appears to scale only polynomially when both $N$ and $\ell$ are simultaneously increased (while keeping their ratio fixed to mimic a thermodynamic limit of the particle system).

 \end{abstract}
 

\section{Introduction} \label{sec:Intro}
Multi-marginal optimal transport (MMOT) suffers from the curse of dimension. If the marginals are discretized by $\ell$ gridpoints, optimal (or candidate) Kantorovich plans for the $N$-marginal problem require $\ell^N$ gridpoint values. While powerful and successful computational schemes centered around the Sinkhorn algorithm have been developed for two-marginal problems ($N=2$) \cite{Cu13,Sch16,Sch19,PC19,BS20}, with recent extensions to a small number of marginals \cite{BCN16,Ne17,BCN19}, the high-dimensionality of multi-marginal plans forbids the use of these schemes in practice already beyond a handful of marginals.

On the other hand, in recent applications of MMOT to many-electron physics \cite{CFK13, BDG12}, data science \cite{AC11}, or fluid dynamics \cite{Br89, Ne17}, $N$ corresponds, respectively, to the number of electrons in a molecule, datasets in a database, or timesteps. 
This makes it highly desirable to develop computational schemes for MMOT with large $N$. \textcolor{black}{In the context of the MMOT problem arising in many-electron physics \cite{CFK13, BDG12} which is our key motivating application and the focus of this paper, some recent advances were made. In \cite{FV18} two of the present authors obtained a rigorous sparsity result (whose ancestor is the celebrated Brenier's theorem \cite{Br91}): after discretization, for any marginals and costs there exist optimizers which are superpositions of at most $\ell$ symmetrized Dirac measures. Moreover the structure of optimizers was shown to be closely related to the Monge ansatz of OT theory, and a two-marginal formulation of the $N$-marginal problem was given. In \cite{KY19}, Khoo and Ying introduced and studied a semi-definite relaxation of the two-marginal formulation and presented an algorithm for the relaxed problem. In \cite{ACEL21, ACE21}, Alfonsi, Coyaud, Ehrlacher, and Lombardi established existence of sparse optimizers even in the situation when the state space is kept continuous and only the marginal constraints are discretized; moreover they proposed a constrained Lagrangian particle method for the ensuing problem. Also, let us mention a recent advance not related to MMOT, namely that smooth {\it two}-marginal problems in high dimension are soluble with dimension-free computational rates, with potentially exponentially dimension-dependent constants \cite{Via21}.}

Here we present a simple and extremely efficient algorithm for MMOT which combines MMOT sparsity, methods from high-dimensional discrete optimization, and recent advances in machine learning. \textcolor{black}{Numerical results show that it allows the accurate} computation of optimal plans with,  say, $N=30$ marginals and $\ell=100$ gridpoints or basis functions per dimension (i.e., $\ell^N=10^{60}$) with Matlab on a laptop. In benchmark examples of this size where the exact solution is known, the algorithm always found the exact optimizers (see section \ref{sec:numerical}). Moreover, empirically (see Figure 5) the number of computational steps needed to find them scales only polynomially instead of exponentially in the thermodynamic limit when both $N$ and $\ell$ get large with their ratio $N/\ell$ remaining constant, although we cannot offer a rigorous proof of this fact. Instead, in section \ref{sec:NP} we show that the pricing problem which our genetic learning method addresses is NP-complete. For a related result recently posted on arXiv see \cite{AB20}. 

Our algorithm, which we call {\it Genetic Column Generation} ({\tt GenCol}), is presented in this paper in detail in the context of the multi-marginal optimal transport problems arising in many-electron physics. 
It is based on three ideas:
\begin{itemize}
\item the existence of extremely sparse optimizers as first pointed out and investigated in the present context by two of the authors in \cite{FV18}. This breaks the curse of dimension with respect to storage complexity (but at the time we could not offer any algorithm).
\item the method of column generation (CG), which is well established in discrete optimization but has to our knowledge not hitherto been used in MMOT. CG is a pragmatic approach to tackle certain extremely high-dimensional problems which originated in integer programming. \textcolor{black}{The latter arises when looking for Monge plans for $\ell=2$ and $N$ large, in which case the unknown is a pair of vectors in $\{0,1\}^N$.} We note that this is exactly the opposite regime to $N=2$, $\ell$ large where the Sinkhorn algorithm works most successfully.
\item a simple genetic method tailormade for MMOT to overcome the well known bottleneck in CG that one must be able to generate new candidate columns efficiently. In our context new columns represent intricate spatial many-body correlation patterns of the system which are not known a priori; these are learned with the help of an ``adversary'' represented by the dual state within CG, in loose similarity to Wasserstein GANs \cite{ACB17}.    
\end{itemize}
The underlying theory is described in sections \ref{sec:MMOT}--\ref{sec:genetic}. The algorithm, which in the end is rather simple, is presented in section \ref{sec:algo}. Numerical results for test problems up to sizes of $\ell^N\approx 10^{60}$ are given in section \ref{sec:numerical}. 
\textcolor{black}{Applications to more complex electronic structure problems} will be given elsewhere.

\section{MMOT, motivation, discretization} \label{sec:MMOT}

{\bf Multi-marginal optimal transport.} Many different problems in mathematics, science, and engineering can be cast in the form of the general multi-marginal optimal transport problem: 

\begin{center}
\begin{minipage}{0.85\textwidth}
Minimize a cost functional 
\begin{equation} \label{MMOT}
          C[\gamma ] = \int_{X_1\times \ldots \times X_N}  c(x_1,\ldots,x_N) \, d\gamma(x_1,\ldots,x_N)
\end{equation}
over $N$-point probability measures
\begin{equation} \label{space}
     \gamma\in\calP(X_1\times \ldots \times X_N)
\end{equation}
subject to the marginal constraints
\begin{equation} \label{margs}
          M_{X_i}\gamma = \mu_i \;\;\; (i=1,\ldots,N).
\end{equation}
\end{minipage}
\end{center}

\noindent
Here the $X_i$ are metric spaces (in practice, subsets of $\R^d$ for continuous problems and finite sets for discrete problems), the $\mu_i$ are given Borel probability measures on $X_i$,  ${\calP}(X_1\times \ldots \times X_N)$ denotes the set of Borel probability measures on $X_1\times \ldots \times X_N$, $c \, : \, X_1\times \ldots \times X_N \to \R\cup\{+\infty\}$ is a cost function, and the marginal of $\gamma$ with respect to the i$^{th}$ space $X_i$ is the probability measure on $X_i$ defined by 
$$
          M_{X_i}\gamma (A) = \gamma(X_1\times \ldots \times X_{i-1}\times A \times X_{i+1} \times \ldots \times X_N) \; \mbox{ for all measurable }A\subseteq X_i.
$$
Optimizers are known as {\it optimal plans} or {\it Kantorovich plans}. 
Both the analysis and the numerical treatment of optimal transport problems have been the subject of intensive and fruitful research, with the focus overwhelmingly on two-marginal problems ($N=2$); see \cite{Vi09, Sa15, PC19} for wide-ranging surveys. 

Multi-marginal problems ($N>2$), about which much less is known, have been considered for quite some time in operations research, probability theory, analysis, and mathematical economics \cite{Pi68, Po94, RR98, GS98, Sp00, CMN10, BDM12}. Recently, important examples of multi-marginal problems with large $N$ have emerged independently in many-electron physics \cite{CFK13, BDG12}, fluid dynamics \cite{Br89, Ne17}, and data science \cite{AC11}. The number $N$ of marginals corresponds, respectively, to the number of particles, timesteps, or datasets in a database, 
motivating the interest in large $N$. 

{\bf Physical motivation.} A central example which we want to attack in this paper is multi-marginal optimal transport with Coulomb cost, which arises as the strongly correlated limit of density functional theory (DFT). DFT is the most widely used method for numerical electronic structure computations in physics, chemistry, and materials science, see \cite{Be14} for a review. The strongly correlated limit was introduced by Seidl \cite{Se99}. As first noticed and exploited in \cite{CFK13, BDG12} the limit problem is an optimal transport problem, with
\begin{equation} \label{CoulombCase}
   X_i = \bar{X}\subseteq\R^d \, \forall i, \;\; \mu_i = \mu \, \forall i, \;\; c(x_1,\ldots,x_N) = \sum_{1\le i <j \le N} \frac{1}{|x_i-x_j|}
\end{equation}
where $\mu \, : \, \R^d\to\R$ is the single-particle density of the system, normalized so that it integrates to $1$. See \cite{CFK18} for a rigorous derivation from the underlying quantum many-body system. In physics one is only interested in Kantorovich plans which are symmetric with respect to the $x_i$ (as these model $N$-point position densities of electrons, which are symmetric by the laws of quantum theory). This means that for all permutations $\sigma$, 
$$
   \gamma(A_1\times \ldots \times A_N) = \gamma(A_{\sigma(1)}\times \ldots \times A_{\sigma(N)})  \mbox{ for any Borel subsets }A_1,\ldots,A_N \mbox{ of }\bar{X}.
$$
Mathematically, this restriction does not alter the optimal cost because for equal marginals and a symmetric cost $c$ (as in \eqref{CoulombCase}), each non-symmetric plan gives rise to a symmetric one with the same cost, by symmetrization. Also, for a symmetric plan, any one marginal condition implies the others. Thus in the situation \eqref{CoulombCase}, denoting the set of symmetric probability measures on $\bar{X}^N$ by $\calP_{sym}(\bar{X}^N)$ and abbreviating $M_{X_1}\gamma=M_1\gamma$, the MMOT problem \eqref{MMOT}--\eqref{margs} reduces to 
\begin{align}
 & \mbox{Minimize } C[\gamma ] = \int_{\bar{X}^N}  c(x_1,\ldots,x_N) \, d\gamma(x_1,\ldots,x_N) \label{MMOT'}
 \\
 & \mbox{over }\gamma\in\calP_{sym}(\bar{X}^N) \label{space'}
 \\[2mm]
 & \mbox{subject to } M_1\gamma = \mu. \label{margs'}
\end{align}
(symmetric MMOT). Here $c$ can be any symmetric function on $\bar{X}^N$.

Corrections from the strongly correlated (multi-marginal optimal transport) limit have been demonstrated to improve the accuracy of electronic structure simulations based on DFT \cite{FGSD16}; but as yet no numerical method is available which can handle this limit reliably for other than small test systems with a few electrons. 
\vspace*{2mm}

{\bf Discretization.} A simple, in the $N=2$ case standard, structure-preserving discretization of \eqref{MMOT}--\eqref{margs} which preserves the favourable sparsity and duality properties of OT is as follows. Suppose the $X_i$ are compact subsets of $\R^d$ and $c \, : \, X_1\times \ldots \times X_N\to\R$ is continuous. Let 
\begin{equation} \label{discmargs}
  \mu_i^{(\nu)} = \sum_{\alpha=1}^{\ell_i(\nu)} m_{i,\alpha}^{(\nu)} \delta_{a_{i,\alpha}^{(\nu)}}, \; m_{i,\alpha}^{(\nu)}\ge 0, \; a_{i,\alpha}^{(\nu)}\in X_i,
\end{equation}
be any sequence of finite sums of Dirac measures converging weak* in ${\cal M}(X_i)=(C(X_i))^*$ to $\mu_i$. (Such approximations always exist. For instance, if $X_i$ is the closure of an open bounded set with smooth boundary, one may partition $X_i$ into distinct small cells $V_{i,\alpha}^{(\nu)}=X_i\cap Q_{i,\alpha}^{(\nu)}$ where the $Q_{i,\alpha}^{(\nu)}$ are disjoint cubes in $\R^d$ of sidelength $1/\nu$. One now picks any  representative point  $a_{i,\alpha}^{(\nu)}$ in $V_{i,\alpha}^{(\nu)}$ and places all the mass from $V_{i,\alpha}^{(\nu)}$ there, i.e. one sets  $m_{i,\alpha}^{(\nu)}=\mu_i(V_{i,\alpha}^{(\nu)})$.) Then any plan $\gamma\in\calP(X_1\times \ldots \times X_N)$ satisfying the marginal conditions \eqref{margs} must be of the following form, where we omit the superscript $\nu$: 
\begin{equation} \label{zero'}
   \gamma = \sum_{i_1=1}^{\ell_1} \ldots \sum_{i_N=1}^{\ell_N} \gamma_{i_1 \ldots i_N}\delta_{a_{1,i_1}} \otimes \ldots \otimes \delta_{a_{N,i_N}}
\end{equation}
so such a plan can be viewed as a tensor $(\gamma_{i_1 \ldots i_N})$ of order $N$ and \eqref{MMOT}--\eqref{margs} reduces to the discrete problem 
\begin{align}
  & \mbox{Minimize }C[\gamma ] = \sum_{i_1=1}^{\ell_1}\ldots \sum_{i_N=1}^{\ell_N} \gamma_{i_1\ldots i_N} c(a_{1,i_1},\ldots,a_{N,i_N}) \label{one'} \\
  & \mbox{subject to } \sum_{i_j\, : \, j\neq k} \gamma_{i_1\ldots i_k\ldots  i_N} = m_{k,i_k} \; \forall i_k\in\{1,\ldots ,\ell_k\} \label{two'} \\
  & \mbox{\textcolor{white}{subject to }}\gamma\ge 0 \label{three'}
\end{align}
(with the last inequality understood componentwise). For symmetric MMOT, we may assume that the $\ell_i$, $a_{i,\alpha}$, and $m_{i,\alpha}$ are independent of $i$, and the discrete problem  reads as follows: given a set of $\ell$ distinct discretization points,
\begin{equation} \label{eq:FinSta}
        X = \{ a_1,\ldots ,a_\ell\} \subset\R^d,
\end{equation}
and a marginal $\lambda^*\in\calP(X)$ which we may view as a vector in $\R^\ell$ whose $i^{th}$ component is given by $\lambda^*(\{a_i\})$, 
\begin{align} 
  & \mbox{Minimize }C[\gamma ] = \sum_{i_1,\ldots ,i_N=1}^{\ell} \gamma_{i_1\ldots i_N} c(a_{i_1},\ldots ,a_{i_N}) \mbox{ over }\gamma\in\calP_{sym}(X^N) \label{one''} \\[-1mm] 
  & \mbox{subject to }\sum_{i_2,\ldots ,i_N=1}^\ell \gamma_{i_1i_2\ldots i_N}= \lambda^*_{i_1} \mbox{ for all }i_1\in\{1,\ldots ,\ell\} \label{two''} \\[1mm]
  & \mbox{\textcolor{white}{subject to }}\gamma \ge 0. \label{three''} 
\end{align}
\textcolor{black}{The associated dual problem is 
\begin{align}
  & \mbox{Maximize } \sum_{i=1}^\ell y_i \lambda_i^* \mbox{ over }y\in\R^\ell \label{four''} \\
  & \mbox{subject to }\tfrac{1}{N}\bigl(y_{i_1}+\ldots + y_{i_N}\bigr) \le c(a_{i_1},...,a_{i_N}) \; \forall i_1,\ldots,i_N\in\{1,\ldots ,\ell\}; \label{five''}
\end{align}
it discretizes the continuous dual problem \cite{BDG12} to maximize $\int_{\bar{X}} y \, d\mu$ over measurable functions $y\, : \, \bar{X}\to\R$ satisfying $\tfrac{1}{N}(y(x_1)+ \ldots +y(x_N))\le c(x_1,\ldots ,x_N)$ $\forall x_1,\ldots, x_N\in \bar{X}$, whose solution are called {\it Kantorovich potentials}. By LP duality, the value of \eqref{one''}--\eqref{three''} equals that of \eqref{four''}--\eqref{five''}.}

Application of a well known stability result in optimal transport theory (see \cite{Sa15} Theorems 1.50 and 1.51 in the context of two-marginal problems; the extension to $N$ marginals is straightforward) immediately yields the following convergence result as $\nu\to\infty$.
\begin{thm} \label{T:disc_to_cont} (Justification of discretization) For any compact sets $X_1,\ldots ,X_N$ in $\R^d$, any continuous cost $c\, : \, X_1\times \ldots  \times X_N\to\R$, and any discretization \eqref{discmargs} of the marginals which converges weak* to these, the optimal cost of  the discretized problem \eqref{zero'}--\eqref{two'} converges to that of the continuous problem \eqref{MMOT}--\eqref{margs}. Moreover any sequence of optimizers $\gamma^{(\nu)}$ of the discretized problem converges -- after passing to a subsequence -- weak* to a minimizer of the continuous problem.  
\end{thm}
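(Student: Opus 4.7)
The plan is to establish matching $\liminf$ and $\limsup$ bounds on the optimal values, use weak* compactness of $\calP(X)$ (where $X:=X_1\times\ldots\times X_N$) to extract subsequential limits of discrete optimizers, and then identify those limits as continuous minimizers. Since each $X_i\subset\R^d$ is compact, so is $X$; together with separability of $C(X)$, Banach--Alaoglu implies that $\calP(X)\subset\calM(X)=C(X)^*$ is weak* sequentially compact. Continuity of $c$ on compact $X$ gives $c\in C(X)$, so $\int c\, d\gamma^{(\nu)}\to \int c\, d\gamma^*$ whenever $\gamma^{(\nu)}\stackrel{*}{\rightharpoonup}\gamma^*$. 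Denote the optimal values of \eqref{MMOT}--\eqref{margs} and \eqref{zero'}--\eqref{two'} by $\calV$ and $\calV^{(\nu)}$ respectively.

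For the $\liminf$ direction, let $\gamma^{(\nu)}$ be optimizers of the discretized problem. Passing to a subsequence (not relabelled), I may assume both $\calV^{(\nu)}\to \liminf_\nu \calV^{(\nu)}$ and $\gamma^{(\nu)}\stackrel{*}{\rightharpoonup}\gamma^*\in\calP(X)$. For any $\varphi\in C(X_i)$, lifting $\varphi$ to $C(X)$ via the $i$th-coordinate projection gives
$$
\int_{X_i}\varphi\, d\, M_{X_i}\gamma^* = \int_X \varphi\, d\gamma^* = \lim_\nu \int_X\varphi\, d\gamma^{(\nu)} = \lim_\nu \int_{X_i}\varphi\, d\mu_i^{(\nu)} = \int_{X_i}\varphi\, d\mu_i,
$$
so $M_{X_i}\gamma^*=\mu_i$ for every $i$. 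Hence $\gamma^*$ is feasible for \eqref{MMOT}--\eqref{margs}, and $C[\gamma^*]=\lim_\nu C[\gamma^{(\nu)}]=\liminf_\nu \calV^{(\nu)}$, which combined with $\calV\le C[\gamma^*]$ yields $\calV\le \liminf_\nu\calV^{(\nu)}$.

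For the $\limsup$ direction, I build a recovery sequence for any feasible $\gamma$. For each $i$, pick a $W_1$-optimal coupling $\pi_i^{(\nu)}\in\calP(X_i\times X_i)$ with first marginal $\mu_i$ and second marginal $\mu_i^{(\nu)}$. Since weak* convergence on the compact metric space $X_i$ is equivalent to $W_1$ convergence, $\int d(x_i,x_i')\, d\pi_i^{(\nu)}=W_1(\mu_i,\mu_i^{(\nu)})\to 0$, so Markov's inequality forces $\pi_i^{(\nu)}$ to concentrate on the diagonal of $X_i\times X_i$. Disintegrating $\pi_i^{(\nu)}(dx_i\, dx_i')=\mu_i(dx_i)\, K_i^{(\nu)}(x_i,dx_i')$, define the glued measure on $X\times X$ by
$$
\Gamma^{(\nu)}(dx_1\, dx_1'\cdots dx_N\, dx_N'):=\gamma(dx_1\cdots dx_N)\,\prod_{i=1}^N K_i^{(\nu)}(x_i,dx_i'),
$$
whose unprimed marginal is $\gamma$ and whose $(x_i,x_i')$ marginal is $\pi_i^{(\nu)}$. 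Let $\gamma^{(\nu)}$ be its primed marginal on $X$; then $M_{X_i}\gamma^{(\nu)}=\mu_i^{(\nu)}$, and the grid-support of these marginals forces $\gamma^{(\nu)}$ to have the form \eqref{zero'}, so it is feasible for \eqref{zero'}--\eqref{two'}. For $\varphi\in C(X)$, the identity
$$
\int_X\varphi\, d\gamma^{(\nu)}-\int_X\varphi\, d\gamma=\int\bigl[\varphi(x_1',\ldots ,x_N')-\varphi(x_1,\ldots ,x_N)\bigr]\, d\Gamma^{(\nu)},
$$
combined with uniform continuity of $\varphi$ on the compact set $X$ and the diagonal concentration of each $K_i^{(\nu)}$, yields $\gamma^{(\nu)}\stackrel{*}{\rightharpoonup}\gamma$ and hence $C[\gamma^{(\nu)}]\to C[\gamma]$. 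Taking the infimum over feasible $\gamma$ gives $\limsup_\nu \calV^{(\nu)}\le \calV$.

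Combining both bounds gives $\calV^{(\nu)}\to\calV$, and the identity $C[\gamma^*]=\calV$ obtained in the $\liminf$ step identifies every weak* cluster point $\gamma^*$ of discrete optimizers as a continuous minimizer. The main technical step is the $\limsup$ direction: $\gamma^{(\nu)}$ must simultaneously match all $N$ prescribed discrete marginals while staying close to $\gamma$. The gluing lemma resolves this by treating $\gamma$ itself as the coupling backbone into which the $N$ independent one-marginal perturbations $\pi_i^{(\nu)}$ are inserted, reducing the $N$-marginal approximation to $N$ scalar ones.
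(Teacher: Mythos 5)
Your proof is correct. The paper does not write out an argument at all---it simply invokes the stability theorems of Santambrogio (Theorems 1.50 and 1.51) for two marginals and asserts that the extension to $N$ marginals is straightforward; what you have done is supply that extension in full, and your two halves (weak* compactness plus continuity of $c$ for the $\liminf$ bound, and a gluing-lemma recovery sequence matching all $N$ discrete marginals for the $\limsup$ bound) are exactly the standard mechanism behind the cited stability result, so this counts as the same approach rather than a genuinely different route.
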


More sophisticated discretizations can be considered. For instance one can represent integrable marginals $\mu_k$ by piecewise linear finite elements and use effective cost coefficients obtained by integrating the continuous cost function against the tensor products of these elements, as in \cite{CFM15} where the Coulomb problem was simulated for the dihydrogen molecule. 
For smooth marginals and costs this is expected to improve the discretization error from $O(1/\nu)$ to $O(1/\nu^2)$. Moreover, to alleviate the computational cost the elements could be chosen adaptively so that each element carries approximately the same marginal mass \cite{CFM15}. In this paper we do not investigate such refinements, and confine ourselves to the basic qualitative justification of the discretization \eqref{zero'}--\eqref{two'} given in Theorem \ref{T:disc_to_cont}.

\textcolor{black}{The discrete problems \eqref{one'}--\eqref{three'} or \eqref{one''}--\eqref{three''} are high-dimensional LPs. For some costs with very special interaction structure (such as the Wasserstein barycenter problem) a transformation to low-dimensional LPs is possible \cite{COO15} (see also \cite{AlB20}), making standard methods from linear programming applicable. For general costs, including \eqref{CoulombCase}, such schemes become unfeasible beyond a handful of marginals, due to the curse of dimension.}

\section{Extremal formulation of symmetric multi-marginal OT} \label{sec:extremal}
Starting point of the algorithm presented here is the following equivalent formulation of symmetric MMOT introduced in \cite{FV18}, in which (candidate and optimal) Kantorovich plans are expressed as convex combinations of extreme points of ${\cal P}_{sym}(X^N)$. This eliminates any redundancy in the parametrization of plans and thus reduces the problem dimension, while at the same time keeping the problem in the form used in two of the pioneering articles on column generation \cite{DW60, DW61}. 

It is not difficult to show (see \cite{FV18}) that when $X$ is a finite state space, \eqref{eq:FinSta}, the extreme points of $\calP_{sym}(X^N)$ can be uniquely recovered from their marginals, which are given by the $\tfrac{1}{N}$-quantized probability measures on $X$,
\begin{equation}
\label{eq:QuantizedProbMeasures}
  \calP_ {\frac{1}{N}}(X):=\left\{ \lambda\in \calP(X)\, \big| \, \lambda(\{ a_i\})
\in\left\{0,\frac{1}{N},\frac{2}{N},\ldots,\frac{N}{N}\right\} \textrm{ for all } i \in \{ 1,\ldots,\ell \}\right\} .
\end{equation}
To recover the corresponding extreme point, write an element $\lambda$ from the above set in the form $\sum_{k=1}^N\tfrac{1}{N}\delta_{a_{i_k}}$ for some (not necessarily distinct) points $a_{i_1},\ldots ,a_{i_N}\in X$ and set
\begin{equation} \label{Nptconfig}
     \gamma_\lambda = S_N \delta_{a_{i_1}}\otimes \ldots  \otimes \delta_{a_{i_N}}.
\end{equation}
Here $S_N$ is the symmetrizer defined by $(S_N\gamma)(A_1\times \ldots  \times A_N)=\tfrac{1}{N!}\sum_\sigma \gamma(A_{\sigma(1)}\times \ldots  \times A_{\sigma(N)})$, with the sum running over all permutations of $\{1,\ldots ,N\}$. Of course any element of the set $\calP_{sym}(X^N)$ is a convex combination of extreme points, but here something better is true:
\begin{lem} \label{L:extremal_rep} \cite{FV18} Any element $\gamma\in\calP_{sym}(X^N)$ can be represented uniquely as a convex combination of the above extreme points, that is, 
\begin{equation} \label{deFi}
          \gamma = \sum_{\lambda\in\calP_{\frac{1}{N}}(X)} \alpha_\lambda \gamma_\lambda, \;\;
          \alpha_\lambda\ge 0 \; \forall \lambda\in\calP_{\frac{1}{N}}(X), \;\; \sum_{\lambda\in\calP_{1/N}(X)}\alpha_\lambda = 1.
\end{equation}
\end{lem}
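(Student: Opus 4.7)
The plan is to exploit the orbit structure of the symmetric group $S_N$ acting on the index set $I := \{1,\ldots,\ell\}^N$ by coordinate permutation. Under this action the orbits are in bijection with $\calP_{1/N}(X)$: sending a tuple $(i_1,\ldots,i_N)$ to the empirical measure $\tfrac{1}{N}\sum_{k=1}^N \delta_{a_{i_k}}$ descends to a bijection between orbits and quantized probability measures. Denote the orbit corresponding to $\lambda$ by $O_\lambda$.

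First I would compute the extreme point $\gamma_\lambda$ explicitly. From the definition of $S_N$ in \eqref{Nptconfig} and the orbit--stabilizer theorem, if $(i_1,\ldots,i_N)$ represents $O_\lambda$ then the tensor entries of $\gamma_\lambda$ satisfy
\begin{equation*}
  (\gamma_\lambda)_{j_1\ldots j_N} \;=\; \frac{|\operatorname{Stab}(i_1,\ldots,i_N)|}{N!} \;=\; \frac{1}{|O_\lambda|} \quad \text{if } (j_1,\ldots,j_N)\in O_\lambda,
\end{equation*}
and $0$ otherwise. In particular the measures $\{\gamma_\lambda\}_{\lambda\in\calP_{1/N}(X)}$ have pairwise disjoint supports, one orbit each.

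Next, for an arbitrary $\gamma\in\calP_{sym}(X^N)$ the symmetry of $\gamma$ means precisely that its tensor $(\gamma_{i_1\ldots i_N})$ is constant on every orbit $O_\lambda$; call this common value $c_\lambda\ge 0$. Setting $\alpha_\lambda := |O_\lambda|\, c_\lambda$, one gets
\begin{equation*}
  \sum_{\lambda\in\calP_{1/N}(X)} \alpha_\lambda\, \gamma_\lambda
  \;=\; \sum_\lambda |O_\lambda|\, c_\lambda \cdot \frac{1}{|O_\lambda|} \sum_{(j_1,\ldots,j_N)\in O_\lambda} \delta_{a_{j_1}}\otimes\cdots\otimes\delta_{a_{j_N}}
  \;=\; \gamma,
\end{equation*}
and $\sum_\lambda \alpha_\lambda = \sum_\lambda |O_\lambda|\, c_\lambda = \sum_{(i_1,\ldots,i_N)\in I} \gamma_{i_1\ldots i_N}=1$, yielding existence with the required properties.

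Uniqueness is immediate from the disjoint-supports observation: if $\sum_\lambda \alpha_\lambda \gamma_\lambda = \sum_\lambda \beta_\lambda \gamma_\lambda$, evaluating both sides at any tuple in $O_\lambda$ gives $\alpha_\lambda/|O_\lambda| = \beta_\lambda/|O_\lambda|$, hence $\alpha_\lambda=\beta_\lambda$. There is no real obstacle here; the argument is essentially orbit counting, and the only point requiring a touch of care is getting the $1/|O_\lambda|$ normalization of $\gamma_\lambda$ right via the orbit--stabilizer theorem so that the coefficients $\alpha_\lambda$ sum to one on the nose.
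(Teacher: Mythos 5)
Your proof is correct and follows the same line as the paper, which only sketches the argument (citing \cite{FV18}) and notes that uniqueness is immediate from the mutually disjoint supports of the $\gamma_\lambda$ --- exactly your uniqueness step. The orbit--stabilizer computation giving $(\gamma_\lambda)_{j_1\ldots j_N}=1/|O_\lambda|$ on the orbit and the resulting choice $\alpha_\lambda=|O_\lambda|c_\lambda$ are the standard existence argument and check out.
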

Here the uniqueness is obvious from the fact that the $\gamma_\lambda$ have mutually disjoint support. 

Since $\gamma_\lambda$ has marginal $\lambda$, the marginal condition becomes
\begin{equation}
\label{eq:ATSPedestrianTwo}
\lambda^*=\sum_{\lambda\in\mathcal{P}_ {\frac{1}{N}}(X)} \alpha_{\lambda} \lambda.
\end{equation}
Thus the MMOT problem \eqref{zero'}--\eqref{three'} can be written as the following optimization problem over the coefficient vectors $\alpha$. Here and below we identify probability measures $\lambda\in\calP_{1/N}(X)$ with vectors in $\R^\ell$ whose $i$-th component is given by $\lambda(\{a_i\})$.
\begin{align}
\label{eq:ColGenProblemOne}	
\textrm{Minimize }		& c^T\alpha =\sum_{\lambda\in\mathcal{P}_{\frac{1}{N}}(X)} c_{\lambda}\alpha_{\lambda} \\
\label{eq:ColGenProblemTwo}
\textrm{subject to }	& A\alpha =\lambda^*\\
\label{eq:ColGenProblemThree}
						& \alpha \geq  0,
\end{align}
with cost coefficients
\begin{equation} \label{cost1}
c_{\lambda}= \sum_{i_1,\ldots ,i_N=1}^\ell (\gamma_\lambda )_{i_1,\ldots ,i_N} c(a_{i_1},\ldots ,a_{i_N})
\end{equation}
and $A$ being the $\ell\times\binom{N+\ell-1}{N}$ matrix defined by 
\begin{equation} \label{A}
A\alpha=\sum_{\lambda\in\mathcal{P}_{\frac{1}{N}}(X)} \alpha_{\lambda} \lambda,
\end{equation}
that is, the columns of $A$ are given -- say, in alphabetical order -- by the vectors in $\calP_{1/N}(X)$. For instance, for $\ell=5$ and $N=3$, 
\setcounter{MaxMatrixCols}{20}
\begin{equation*}
  A = \frac{1}{3} \mbox{ $\begin{pmatrix}
  3 \, 0 \, 0 \, 0 \, 0 & 2 \, 2 \, 2 \, 2 \, 1 \, 1 \, 1 \, 1 \, 0 \, 0 \, 0 \, 0 \, 0 \, 0 \, 0 \, 0 \, 0 \, 0 \, 0 \, 0 & 1 \, 1 \, 1 \, 1 \, 1 \, 1 \, 0 \, 0 \, 0 \, 0 \\
  0 \, 3 \, 0 \, 0 \, 0 & 1 \, 0 \, 0 \, 0 \, 2 \, 0 \, 0 \, 0 \, 2 \, 2 \, 2 \, 1 \, 1 \, 1 \, 0 \, 0 \, 0 \, 0 \, 0 \, 0 & 1 \, 1 \, 1 \, 0 \, 0 \, 0 \, 1 \, 1 \, 1 \, 0 \\
  0 \, 0 \, 3 \, 0 \, 0 & 0 \, 1 \, 0 \, 0 \, 0 \, 2 \, 0 \, 0 \, 1 \, 0 \, 0 \, 2 \, 0 \, 0 \, 2 \, 2 \, 1 \, 1 \, 0 \, 0 & 1 \, 0 \, 0 \, 1 \, 1 \, 0 \, 1 \, 1 \, 0 \, 1 \\
  0 \, 0 \, 0 \, 3 \, 0 & 0 \, 0 \, 1 \, 0 \, 0 \, 0 \, 2 \, 0 \, 0 \, 1 \, 0 \, 0 \, 2 \, 0 \, 1 \, 0 \, 2 \, 0 \, 2 \, 1 & 0 \, 1 \, 0 \, 1 \, 0 \, 1 \, 1 \, 0 \, 1 \, 1 \\
  0 \, 0 \, 0 \, 0 \, 3 & 0 \, 0 \, 0 \, 1 \, 0 \, 0 \, 0 \, 2 \, 0 \, 0 \, 1 \, 0 \, 0 \, 2 \, 0 \, 1 \, 0 \, 2 \, 1 \, 2 & 0 \, 0 \, 1 \, 0 \, 1 \, 1 \, 0 \, 1 \, 1 \, 1 \end{pmatrix}$} .  
\end{equation*} 
Note that the normalization condition that the $\alpha_\lambda$ must sum to $1$ is automatically enforced by the marginal constraints $A\alpha=\lambda^*$. 

We refer in the sequel to the linear program eq.~\eqref{eq:ColGenProblemOne}--\eqref{eq:ColGenProblemThree} as the {\it master problem} (MP). This is the problem we seek to tackle in this paper. Note that the curse of dimension is still present as the number of unknowns still grows combinatorially in $N$; just that by exploiting symmetry we have reduced it from $\ell^N$ in \eqref{zero'}--\eqref{two'} to $\binom{N+\ell-1}{N}$. For instance, for $25$ particles and $100$ gridpoints for discretizing the marginal, this reduces the number of unknowns from $10^{50}$ to about $10^{26}$ -- still out of reach of conventional methods.

\section{Sparsity of optimizers; sparse manifolds}
A fundamental feature of the above MP which our algorithm exploits is the extreme sparsity of optimizers. As is well known in polyhedral optimization, the number of nonzero entries of extremal optimizers is governed by the number of equality constraints. In the context of MMOT, this number is much smaller than the number of unknowns, and the ensuing exact sparse ansatz was first introduced and investigated by two of the authors in \cite{FV18}, where the following result was proved.
\begin{thm} \label{T:sparse} \cite{FV18} For any $\ell$ and $N$, any symmetric cost function $c \, : \, X^N\to\R$, and any marginal $\lambda_*\in\calP(X)$, there exists an optimizer $\alpha_*$ of \eqref{eq:ColGenProblemOne}--\eqref{A} belonging to the manifold
$$
   \calM_\ell := \{ \alpha\in\R^{\binom{N+\ell-1}{N}} \, \big| \, \alpha_\lambda \ge 0 \, \forall \lambda, \; A\alpha = \lambda^*, \; \alpha  
   \mbox{ has at most $\ell $ nonzero entries} \}.
$$
\end{thm}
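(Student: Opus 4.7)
The plan is to derive the result as a standard consequence of linear-programming theory applied to the master problem \eqref{eq:ColGenProblemOne}--\eqref{eq:ColGenProblemThree}. This is an LP in standard form: minimize $c^T\alpha$ subject to $A\alpha=\lambda^*$, $\alpha\ge 0$, with $A\in\R^{\ell\times\binom{N+\ell-1}{N}}$. Classical LP theory tells us that if the feasible polyhedron is a nonempty polytope, then the minimum is attained at a vertex (a basic feasible solution), and any basic feasible solution has at most $\ell$ nonzero components, since the columns of $A$ indexing its support must be linearly independent and hence number at most $\mathrm{rank}(A)\le\ell$. The target bound matches exactly, so the strategy is simply to verify the hypotheses.

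First, I would check that the feasible set is nonempty: the ``uncorrelated'' tensor-product plan supported on the pure Dirac measures, namely $\alpha_{\delta_{a_i}}=\lambda^*_i$ and $\alpha_\lambda=0$ otherwise, satisfies $A\alpha=\lambda^*$ and $\alpha\ge 0$, since the column of $A$ corresponding to the pure state $\delta_{a_i}$ is precisely the $i$-th standard basis vector of $\R^\ell$. Second, I would check that the feasible set is bounded (in fact compact): each column of $A$ has entries summing to $1$ (every $\lambda\in\calP_{1/N}(X)$ is a probability measure), so summing the $\ell$ equations in $A\alpha=\lambda^*$ yields $\sum_\lambda \alpha_\lambda=\sum_i\lambda^*_i=1$. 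Combined with $\alpha\ge 0$ this places the feasible set inside the standard probability simplex.

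Third, I would invoke the basic-feasible-solution theorem. Since the cost is linear and the feasible polytope is nonempty and compact, the minimum is attained, and in particular attained at some vertex $\alpha_*$ of the polytope. At a vertex, the columns of $A$ indexed by $\{\lambda:\alpha_{*,\lambda}>0\}$ are linearly independent, so the support has cardinality at most $\mathrm{rank}(A)\le\ell$. This places $\alpha_*$ in $\calM_\ell$.

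I do not anticipate a substantial obstacle; the only mild point worth highlighting in the write-up is that the bound is $\ell$ rather than $\ell+1$ precisely because the normalization $\sum_\lambda\alpha_\lambda=1$ is not an additional constraint but follows from the $\ell$ marginal constraints, as observed in the paragraph preceding the theorem. Thus the effective number of equality constraints is exactly $\ell$, matching the claimed sparsity.
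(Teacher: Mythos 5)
Your proof is correct and follows essentially the same route the paper indicates: it defers the proof to \cite{FV18} but explicitly frames the result as the standard polyhedral fact that extremal optimizers of an LP have support bounded by the number of equality constraints, which is exactly the basic-feasible-solution argument you carry out (including the correct observations that feasibility follows from the columns $\delta_{a_i}$ and that normalization is implied by the marginal constraints). One cosmetic slip: the feasible point $\alpha_{\delta_{a_i}}=\lambda^*_i$ corresponds to the \emph{perfectly correlated} plan $\sum_i\lambda^*_i\,\delta_{a_i}^{\otimes N}$, not the uncorrelated product plan $(\lambda^*)^{\otimes N}$, but this does not affect the argument.
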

Thus in our case the number of required nonzero entries is just $\ell$, independently of $N$. (Strictly speaking $\calM_\ell$ is not a manifold but only an algebraic variety.)

In \cite{FV18} we proposed the name {\it quasi-Monge states} for the elements in this sparse manifold, because of a close connection with the Monge ansatz in optimal transport. More precisely, one can show \cite{FV18} that each plan $\gamma$ corresponding to a coefficient vector in $\calM_\ell$ can be written in the form
$$
   \gamma = S_N \sum_{\nu=1}^\ell \mu_\nu \delta_{T_1(a_\nu)}\otimes \ldots  \otimes \delta_{T_N(a_\nu)}
$$
or, in optimal transport notation (with $( \; )_\sharp$ denoting the push-forward of a measure)
$$
   \gamma = S_N (T_1,\ldots ,T_N)_\sharp \mu,
$$
for $N$ maps $T_1,\ldots ,T_N\, : \, X\to X$ and $\ell$ coefficients $\mu_{\nu}\ge 0$ which sum to $1$. Restricting $\mu$ to be equal to the prescribed marginal $\lambda^*$ is the classical Monge ansatz from optimal transport theory. But the latter is too restrictive for the validity of Theorem \ref{T:sparse} when $N\ge 3$, even in the case of the uniform marginal $\lambda^*=\tfrac{1}{\ell}\sum_{\nu=1}^\ell \delta_{a_\nu}$ (see \cite{Fr19} for simple counterexamples \textcolor{black}{and \cite{Vo19} for a systematic numerical study}). 

From a computational perspective it will be useful to work on a slightly larger ansatz manifold,  
\begin{equation} \label{ML'}
   \calM_{\ell'} := \{\alpha\in\R^{\binom{N+\ell-1}{N}} \, \big| \, \alpha_\lambda \ge 0 \; \forall \lambda, \, A\alpha = \lambda^*, \,  \alpha_* \mbox{ has at most $\ell'$ nonzero entries} \},
\end{equation} 
where
\begin{equation} \label{L'}
    \ell < \ell' = \ell + O(\ell ).
\end{equation}
In practice we will use
\begin{equation}
   \ell' = \beta \ell 
\end{equation}
where $\beta$ is a hyperparameter in the {\tt GenCol} algorithm (chosen to be $5$ in all our simulations). The intuition behind the enlargement of $\calM_\ell$ to $\calM_{\ell'}$ is that it keeps the sparsity at an extremely low level but makes the problem less nonlinear. (In the -- in practice unfeasible -- limit $\ell'=\binom{N+\ell-1}{N}$ one would obtain back the original linear program.)

%
%
%
%
\section{Genetic column generation} \label{sec:genetic}
\subsection{Column generation}
In light of Theorem \ref{T:sparse} it is -- in principle -- possible to solve the master problem {\it exactly} via an algorithm that runs only on the data-sparse manifolds $\calM_\ell$ or $\calM_{\ell + O(\ell)}$ without ever touching the master problem in its entirety. But what to do in practice? Column generation (CG) is a pragmatic approach from discrete optimization, of primal-dual type, in which the primal state evolves precisely on such a sparse manifold. To the best of our knowledge, CG has not hitherto been considered in connection with optimal transport. Its development originated in integer programming, but it has been especially useful in $0/1$-integer programming, \textcolor{black}{where the unknown is a vector in $\{0,1\}^N$ (corresponding to the domain of Kantorovich plans for $\ell=2$ and $N$ marginals) and} where CG can be used in association with branch-and-bound techniques. Successful applications include traveling salesman problems, airline scheduling, and vehicle routing (see, e.g., \cite{LD05}). 

Consider any linear program of the form of our master problem 
\eqref{eq:ColGenProblemOne}--\eqref{eq:ColGenProblemThree}, and suppose we are in the general situation (satisfied in our case) that the matrix $A$ has far fewer rows than columns and the number of columns is far too large to use standard LP solvers (such as Gurobi, \cite{Gurobi19}). In CG one starts off by reducing the master problem to a problem with far fewer variables by admitting only a small sized subset of the columns of $A$ as new constraint matrix. As only those admissible coefficient vectors $\alpha$ of the MP that are supported on the chosen columns are admissible for the new problem, the MP can be viewed as a ``relaxation'' of this new problem. Now the idea is to suitably  $\textit{generate}$ additional candidate \textit{columns} for the reduced problem and use \textcolor{black}{a duality based criterion} to accept or reject them in order to decrease its optimal value and thereby the gap to  the optimal value of the MP.

Let us now explain the method in detail. The first step in CG consists of choosing a small sized subset $I\subset \bar{I}=\left\{1,2,\ldots,\binom{N+\ell-1}{N}\right\}$ of the columns of the constraint matrix $A$ of the MP \eqref{eq:ColGenProblemOne}-\eqref{eq:ColGenProblemThree}. For any such $I$, $A_I$ and $c_I$ denote the submatrix of $A$, respectively the subvector of $c$ that contains exactly the corresponding columns respectively entries. Replacing the original constraint matrix $A$ of the MP \eqref{eq:ColGenProblemOne}-\eqref{eq:ColGenProblemThree} by $A_I$ and the cost vector $c$ by $c_I$ yields the problem
\begin{align}
\label{eq:RMPOne}	
\textrm{Minimize }		& c_I^T\alpha \\
\label{eq:RMPTwo}
\textrm{subject to }	& A_I\alpha =\lambda^*\\
\label{eq:RMPThree}
						& \alpha \geq  0.
\end{align}
Problem \eqref{eq:RMPOne}-\eqref{eq:RMPThree} will be referred to as the \textit{restricted master problem} (RMP). As long as $\sharp I \le \ell'$, candidate or optimal primal states of the RMP, extended by zero to $\bar{I}$, stay in the sparse manifold $\calM_{\ell'}$, eq.~\eqref{ML'}. 

Given a RMP \eqref{eq:RMPOne}-\eqref{eq:RMPThree} induced by a reduced constraint matrix $A_I$,  one would like to add ``better'' columns to $A_I$, i.e., columns that improve the optimal value of the RMP. These 'better' columns are best understood from a dual point of view. The \textit{dual of the restricted master problem}
(DRMP) is given by 
\begin{align}
\label{eq:DRMPOne}	
\textrm{Maximize }		& y^T\lambda^* \\
\label{eq:DRMPTwo}
\textrm{subject to }	& A_I^T y \leq c_I.
\end{align}
Replacing $A_I$ by $A$ and $c_I$ by $c$ in this problem yields the \textit{dual of the master problem} (DMP). The DRMP differs from the DMP by imposing far fewer constraints ($\sharp I$ instead of $\sharp \bar{I}$) on the dual variables $y\in \mathbb{R}^{\ell}$. 

Theoretical discussions of column generation now continue with the following -- for high-dimensional problems infeasible -- step, in which the dual problem is used to find the ``best'' additional column: given a dual optimal solution $y^*$ of the DRMP, solve the so called \textit{pricing problem} (PP) 
\begin{align}
\label{eq:PPOne}
\textrm{Maximize }		& \lambda^Ty^*-c_{\lambda} \\
\label{eq:PPTwo}
\textrm{subject to }	& \lambda\in \mathcal{P}_{\frac{1}{N}}(X).\end{align}
This problem looks for the constraint of the DMP that is violated the most by the given optimal solution $y^*$ of the DRMP.  

But the pricing problem suffers from the fundamental problem we seek to circumvent, namely the curse of dimension. In fact, we will show in section \ref{sec:NP} that even for pairwise costs, in which case the evaluation of $c_\lambda$ is simple (see section \ref{sec:fastcost}), this problem is NP-complete.  

In practice, for high-dimensional problems one needs to replace \eqref{eq:PPOne}--\eqref{eq:PPTwo} by the following: 
\begin{equation} \label{PP'}
   \mbox{Efficiently find a new column $\lambda$ such that }\lambda^Ty^*-c_\lambda > 0.
\end{equation}
Any such column can be added to the restricted constraint matrix $A_I$. The new column represents a constraint of the full dual (DMP) which the solution $y^*$ to the current DRMP violates. Adding this column to the matrix $A_I$ ``cuts off'' $y^*$ from the optimization domain of the DRMP,  yielding a new dual optimal solution $\tilde{y}^*$. Except in degenerate cases, this also leads to a new primal solution and a decrease in cost. For convenience of the reader we include the well known theoretical justification of the acceptance criterion in \eqref{PP'}. 
\begin{lem} (Justification of acceptance criterion) If $\lambda^Ty^*-c_\lambda \le 0$ for all columns $\lambda$ of the full constraint matrix $A$, then the current dual solution $y^*$ of  the DRMP solves the full dual problem DMP, and the current primal solution $\alpha_I$ of the RMP, extended by zeros, solves the full primal problem MP.
\end{lem}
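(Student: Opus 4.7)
The plan is to deduce the statement directly from LP duality, using the fact that the RMP and DRMP form a dual pair whose common optimal value is achieved by $\alpha_I$ and $y^*$ respectively. The assumption $\lambda^T y^* - c_\lambda \le 0$ for every column $\lambda$ of $A$ is precisely the constraint $A^T y^* \le c$, i.e., $y^*$ is feasible for the full dual problem DMP, not just for the DRMP. This is the first and key observation.

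Next, I would verify the primal side. Extending $\alpha_I$ to a vector $\bar{\alpha}\in\R^{\binom{N+\ell-1}{N}}$ by placing zeros on all components outside $I$, one has $A\bar{\alpha} = A_I\alpha_I = \lambda^*$ and $\bar{\alpha}\ge 0$, so $\bar{\alpha}$ is feasible for the MP. Its cost equals $c^T\bar{\alpha} = c_I^T\alpha_I$, since the added entries are zero.

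The final step uses weak and strong LP duality twice. By strong duality for the pair RMP/DRMP (both problems are feasible and bounded as soon as the original MMOT instance is), $c_I^T\alpha_I = (y^*)^T\lambda^*$. Combining with the two observations above gives $c^T\bar{\alpha} = (y^*)^T\lambda^*$, where $\bar\alpha$ is MP-feasible and $y^*$ is DMP-feasible. By weak duality for the pair MP/DMP, every MP-feasible vector has cost at least $(y^*)^T\lambda^*$, and every DMP-feasible vector $y$ satisfies $y^T\lambda^* \le c^T\bar\alpha$. Hence $\bar{\alpha}$ and $y^*$ attain equal primal and dual values and are therefore optimal for the MP and the DMP respectively.

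There is no real obstacle here; the only thing to be slightly careful about is that the RMP is actually solvable so that strong duality applies. This is automatic whenever the chosen index set $I$ is large enough that $A_I\alpha=\lambda^*, \alpha\ge 0$ admits a solution (which is ensured in the algorithm because $\alpha_I$ exists by assumption) and the DRMP is bounded above (which follows from the existence of a dual feasible $y^*$ that the hypothesis itself supplies). I would mention this briefly and otherwise present the argument as the three-line chain of (in)equalities above.
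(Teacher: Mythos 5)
Your proposal is correct and follows essentially the same route as the paper: observe that the hypothesis is exactly feasibility of $y^*$ for the full dual, extend $\alpha_I$ by zeros, equate its cost with the dual objective via strong duality of the restricted pair, and conclude optimality by duality for the full problem. The only cosmetic difference is that you close the argument with weak duality for the full pair rather than invoking the strong-duality identity $\max_{A^Ty\le c}\lambda^{*T}y=\min_{A\alpha=\lambda^*,\,\alpha\ge0}c^T\alpha$ as the paper does, which changes nothing of substance.
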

\begin{proof}
Denote the current primal solution extended by zeros by $\overline{\alpha_I}$. By assumption, $A^Ty^*\le c$, that is, $y^*$ is admissible for the full dual problem. Using, in order of appearance, the definition of $\overline{\alpha_I}$, duality for the RMP, admissiblity of $y^*$, and duality for the full MP gives
$$
  c^T \overline{\alpha_I} = c_I^T \alpha_I = \lambda^* y^* \le 
  \max_{y \, : \, A^T y \le c} \lambda^* y = \min_{\substack{\alpha \, : \, A\alpha = \lambda^* \\ \alpha \ge 0}} c^T \alpha. 
$$
Since $\overline{\alpha_I}$ is admissible for the full primal problem, the assertion follows. 
\end{proof}
\subsection{Fast cost evaluation for candidate columns} \label{sec:fastcost}
A possible additional bottleneck in CG besides the large number of columns can be the cost evaluation of a new column, required by the acceptance criterion in \eqref{PP'}. In the case of MMOT with large $N$, a priori this requires evaluation of a high-dimensional sum, see \eqref{one''}. We now show that, due to the special structure of the extreme points $\gamma_\lambda$ of $\calP_{sym}(X^N)$ and the fact that the costs of interest are of pairwise form, this cost evaluation can in fact be done extremely fast, requiring only an $N$-{\it independent} number of arithmetic operations.

First, it is elementary that whenever $c\, : \, X^N\to\R$ is of pairwise and symmetric form,
\begin{equation} \label{pairwise}
   c(x_1,\ldots ,x_N) = \sum_{1\le i<j\le N} w(x_i,x_j) \mbox{ for some symmetric }w \, : \, X\times X\to\R,
\end{equation}
then for any $\gamma\in\calP_{sym}(X^N)$,
$$
   C[\gamma] = {N \choose 2} \sum_{i,j=1}^\ell (M_2\gamma)_{ij} w(a_i,a_j)
$$
where $M_2\gamma$ is the two-point marginal of $\gamma$, defined by $M_2\gamma(A)=\gamma(A\times X^{N-2})$ for all subsets $A$ of $X^2$. For further discussion of this representation and its usefulness in electronic structure see \cite{FMPCK13}. In case of the extreme points $\gamma_\lambda$, the following explicit formula for the two-point marginal in terms of the one-point marginal $\lambda$ was derived in \cite{FV18}; it shows that on these points the highly non-invertible projection map from $M_2\gamma$ to $M_1\gamma$ can be inverted.
\begin{lem} \cite{FV18} For any $\lambda\in\calP_{1/N}(X)$, 
$$
   M_2\gamma_\lambda =
  \frac{N}{N-1}\lambda\otimes\lambda-\frac{1}{N-1}\sum_{i=1}^\ell\lambda(\{a_i\})     
  \delta_{a_i}\otimes\delta_{a_i}.	
$$
Moreover $\gamma_\lambda$ is the unique element of $\calP_{sym}(X^N)$ with this two-point marginal.
\end{lem}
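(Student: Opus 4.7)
My plan is to handle the two assertions separately: first the explicit formula by a direct counting argument, then uniqueness via the extremal decomposition from Lemma \ref{L:extremal_rep} combined with a one-line variance argument.

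For the formula I would write $\lambda=\sum_{k=1}^N\tfrac{1}{N}\delta_{a_{i_k}}$ and introduce the occupation numbers $n_p:=\#\{k\, : \, i_k=p\}$, so that $\lambda(\{a_p\})=n_p/N$. Directly from the definition of $\gamma_\lambda=\tfrac{1}{N!}\sum_\sigma \delta_{a_{i_{\sigma(1)}}}\otimes\cdots\otimes\delta_{a_{i_{\sigma(N)}}}$, the quantity $M_2\gamma_\lambda(\{(a_p,a_q)\})$ equals $(N-2)!/N! = 1/(N(N-1))$ times the number of ordered pairs $(k_1,k_2)$ with $k_1\ne k_2$, $i_{k_1}=p$ and $i_{k_2}=q$, since the remaining $N-2$ coordinates of $\sigma$ contribute $(N-2)!$ permutations that are marginalized away. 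Splitting the two cases $p\ne q$ (count $=n_p n_q$) and $p=q$ (count $=n_p(n_p-1)$) and comparing entrywise with the right-hand side -- where $\tfrac{N}{N-1}\lambda\otimes\lambda$ contributes $\tfrac{n_p n_q}{N(N-1)}$ and the diagonal correction subtracts exactly $\tfrac{n_p}{N(N-1)}$ precisely when $p=q$ -- gives the claimed identity in both cases.

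For uniqueness, suppose $\tilde\gamma\in\calP_{sym}(X^N)$ satisfies $M_2\tilde\gamma=M_2\gamma_\lambda$. By Lemma \ref{L:extremal_rep} I can write uniquely $\tilde\gamma=\sum_{\mu\in\calP_{1/N}(X)}c_\mu\gamma_\mu$ with $c_\mu\ge 0$, $\sum_\mu c_\mu=1$. Setting $n'_p(\mu):=N\mu(\{a_p\})$ and applying the formula just established to each $\gamma_\mu$, the off-diagonal entries of $M_2\tilde\gamma$ together with marginalization yield $M_1\tilde\gamma=\lambda$, i.e.\ $\sum_\mu c_\mu n'_p(\mu)=n_p$ for every $p$, while the diagonal entry $(p,p)$ gives $\sum_\mu c_\mu n'_p(\mu)(n'_p(\mu)-1)=n_p(n_p-1)$. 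Subtracting the first identity from the second and completing the square produces
$$\sum_\mu c_\mu\bigl(n'_p(\mu)-n_p\bigr)^2 \;=\; 0,$$
so $n'_p(\mu)=n_p$ for every $\mu$ with $c_\mu>0$ and every index $p$. Hence any such $\mu$ must equal $\lambda$, and $\tilde\gamma=\gamma_\lambda$.

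I expect the main obstacle to be uniqueness rather than the formula itself: one must resist trying to invert the (generically highly singular) marginalization map $M_2\, : \,\calP_{sym}(X^N)\to\calP_{sym}(X^2)$ by hand, and instead use the rigid structure of the extremal cone furnished by Lemma \ref{L:extremal_rep} to reduce the problem, one point $a_p$ at a time, to a scalar second-moment identity. Once one recognizes that identity as a variance, uniqueness collapses immediately.
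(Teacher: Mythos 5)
Your proof is correct and complete. Note that the paper itself does not prove this lemma --- it is quoted from \cite{FV18} --- so there is no in-text argument to compare against; judged on its own, your write-up supplies a valid self-contained proof using only ingredients available here (the definition \eqref{Nptconfig} of $\gamma_\lambda$ and Lemma \ref{L:extremal_rep}). The permutation count for the formula is right: the weight of $(a_p,a_q)$ under $M_2\gamma_\lambda$ is $\tfrac{(N-2)!}{N!}$ times the number of ordered pairs of distinct indices $(k_1,k_2)$ with $i_{k_1}=p$, $i_{k_2}=q$, giving $n_pn_q/(N(N-1))$ off the diagonal and $n_p(n_p-1)/(N(N-1))$ on it, which matches the right-hand side entrywise. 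For uniqueness, combining the diagonal identity $\sum_\mu c_\mu\, n'_p(\mu)(n'_p(\mu)-1)=n_p(n_p-1)$ with $\sum_\mu c_\mu\, n'_p(\mu)=n_p$ does yield $\sum_\mu c_\mu\,(n'_p(\mu)-n_p)^2=0$, forcing every $\mu$ in the support of the (unique) extremal decomposition to coincide with $\lambda$; this is a clean way to see that $M_2$ is injective on the extreme points even though it is far from injective on all of $\calP_{sym}(X^N)$. One cosmetic remark: you do not need the off-diagonal entries to get $M_1\tilde\gamma=\lambda$; it follows immediately from $M_2\tilde\gamma=M_2\gamma_\lambda$ by taking the first marginal of both sides.
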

(In optimal transport notation, the second term equals $-\tfrac{1}{N-1}\left(id,id\right)_{\#}\lambda$.) This immediately yields the following simple expression for the cost $c_\lambda$ of a column $\lambda\in\calP_{1/N}(X)$. Any such $\lambda$ is now again
 regarded as a vector in $\R^\ell$.
\begin{coro} \label{C:fastcost} If $c$ has the pairwise form \eqref{pairwise}, and $\lambda$ is any element of $\calP_{1/N}(X)$, then
\begin{equation} \label{cost2}
  c_\lambda =  \frac{N^2}{2}\lambda^T C \lambda-\frac{N}{2}\textrm{diag}(C)^T\lambda
\end{equation}
with $C=\left(C_{ij}\right)_{i,j=1}^\ell\in \mathbb{R}^{\ell\times\ell}$ defined by 
\begin{equation}
\label{eq:CostReform}
     C_{ij}=w(a_i,a_j).	
\end{equation}
\end{coro}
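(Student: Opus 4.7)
The proof is essentially a direct computation combining the two ingredients stated immediately before the corollary, so the plan is just to identify the right substitutions and simplify carefully.

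First I would recall that by definition $c_\lambda = C[\gamma_\lambda]$, where $\gamma_\lambda$ is the extreme point of $\calP_{sym}(X^N)$ with one-point marginal $\lambda$. Since $c$ is of pairwise symmetric form \eqref{pairwise}, the representation given just before the lemma yields
$$
   c_\lambda = \binom{N}{2}\sum_{i,j=1}^\ell (M_2\gamma_\lambda)_{ij}\, w(a_i,a_j) = \binom{N}{2}\sum_{i,j=1}^\ell (M_2\gamma_\lambda)_{ij}\, C_{ij},
$$
using the notation \eqref{eq:CostReform}.

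Next, I would substitute the explicit formula for $M_2\gamma_\lambda$ provided by the preceding lemma:
$$
   M_2\gamma_\lambda = \tfrac{N}{N-1}\,\lambda\otimes\lambda \;-\; \tfrac{1}{N-1}\sum_{k=1}^\ell \lambda_k\, \delta_{a_k}\otimes\delta_{a_k}.
$$
Viewed as a matrix in $\R^{\ell\times\ell}$, the first term has entries $\tfrac{N}{N-1}\lambda_i\lambda_j$ and the second term is the diagonal matrix with entries $\tfrac{1}{N-1}\lambda_i$. Contracting with $C$ therefore gives
$$
   \sum_{i,j}(M_2\gamma_\lambda)_{ij} C_{ij} = \tfrac{N}{N-1}\,\lambda^T C \lambda \;-\; \tfrac{1}{N-1}\,\textrm{diag}(C)^T\lambda.
$$

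Finally, multiplying by $\binom{N}{2}=\tfrac{N(N-1)}{2}$ cancels the $(N-1)$ factors in the denominators and yields exactly
$$
   c_\lambda = \tfrac{N^2}{2}\,\lambda^T C \lambda \;-\; \tfrac{N}{2}\,\textrm{diag}(C)^T\lambda,
$$
which is the claim. There is no genuine obstacle here; the only thing to be careful about is distinguishing the matrix $\lambda\otimes\lambda$ (which has nonzero diagonal entries $\lambda_i^2$) from its purely off-diagonal part, so that the $\textrm{diag}(C)$ correction term comes out with the right sign and prefactor.
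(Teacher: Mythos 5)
Your computation is correct and is exactly the argument the paper intends: the corollary is stated as an immediate consequence of the pairwise-cost identity $C[\gamma]=\binom{N}{2}\sum_{i,j}(M_2\gamma)_{ij}w(a_i,a_j)$ together with the explicit formula for $M_2\gamma_\lambda$ from the preceding lemma, and your substitution and cancellation of the $(N-1)$ factors reproduce \eqref{cost2} precisely. No gaps.
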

This reduces cost evaluation to just matrix-vector multiplication with a precomputed matrix of $N$-independent size $\ell$, 
and shows that the acceptance criterion in the pricing problem \eqref{PP'} is 
extremely cheap computationally.
\subsection{Genetic method for generating new columns} \label{sec:GenCol}
To tackle \eqref{PP'}, let us recall the physical meaning of columns in MMOT in the key example of electronic structure. Transport plans $\gamma$ with $N$ marginals correspond to the joint probability density of $N$ electron positions in a continuous $d$-dimensional domain $\Omega\subseteq\R^d$, or on the $\ell$ discretization points of an $\ell$-point discretization $X=\{a_1,\ldots ,a_\ell\}\subset\Omega$ (see section \ref{sec:MMOT}). The columns $\lambda\in\calP_{1/N}(X)$ describe all the possible ``pure'' $N$-particle configurations, obtained by dropping the $N$ electrons on the $\ell$ discretization points (while allowing to multiply occupy sites). The MP \eqref{eq:ColGenProblemOne}--\eqref{A} seeks to determine a stochastic superposition of these electron configurations that minimizes the interaction energy -- prototypically, the mutual Coulomb repulsion -- while fulfilling the marginal constraint. The latter describes the single-electron density, that is, the total occupancy of each site. Finding promising new columns corresponds to guessing good new $N$-particle configurations for the given density and interaction. 

We take the view that guessing such  -- intricately correlated -- configurations from the vast number of possiblities must be {\it learned}. The best available information given a current RMP matrix $A_I$ and a solution $\alpha_I$ to the RMP is the information {\it which columns are successful}, i.e. which ones correspond to a nonzero component of the vector $\alpha_I$. But this is already very valuable many-body information. For instance, in the case of Coulomb repulsion, successful many-particle configurations will already keep the electrons spatially apart from each other and avoid unfavourable clustering. This suggests a genetic approach which performs random small mutations of currently successful many-body configurations. More precisely, we propose the following 
\\[2mm]
{\bf Genetic search rule.} {\it Given an instance of a reduced constraint matrix $A_I$ and a corresponding RMP solution $\alpha_I$, 
\begin{enumerate}
\item allow only columns $\lambda$ of $A_I$ with $(\alpha_I)_\lambda>0$ to be parents 
\item pick a parent column at random
\item create a child by moving one randomly chosen particle in the parent configuration from its location $a\in X$ to a randomly chosen neighbouring site $a'\in X$.
\end{enumerate}
} 
The last step is crucially based on the physical/geometric meaning of columns as $N$-particle configurations in a region of $d$-dimensional Euclidean space. 

Our rule for creation of children has an interesting {\it metric} meaning in column space which has nothing to do with viewing columns as vectors in $\R^\ell$ and using neighbours with respect to standard distances on $\R^\ell$. Instead, children are obtained from parents by {\it moving a minimum amount of mass by a minimum nonzero Euclidean distance}. To formalize this, let $d\, : \, X\times X\to\R$ be the Euclidean metric $d(x,y)=|x-y|$ on $X=\{a_1,\ldots ,a_\ell\}\subseteq\R^d$ inherited from the ambient $\R^d$. Columns are probability measures on $X$ and for any two columns $\lambda$, $\lambda'\in\calP_{1/N}(X)$ let us introduce their {\it Wasserstein-1 distance} (alias earth-mover's distance) inherited from the ground metric $d$,
$$
    W_1(\lambda,\lambda') = \min\left\{ \int_{X\times X} \!\! d(x,y) d\gamma(x,y) \, | \, \gamma\in\calP(X\! \times\! X), \, \gamma \mbox{ has marginals }\lambda \mbox{ and }\lambda'\right\}
$$ 
Then the rule (3) can be reformulated as: {\it 
\begin{enumerate}
\item[3'.] 
  Pick a random nearest neighbour of the parent in the column space $\calP_{1/N}(X)$ with respect to the Wasserstein-$1$ distance induced by the Euclidean metric on $X\subseteq\R^d$.
\end{enumerate}
} 
We remark that, due to the mass quantization in $\calP_{1/N}(X)$, any of the Wasserstein-$p$ distances with $p\in[1,\infty)$ could be used here instead.

We emphasize that this abstract description of our genetic search rule does not mean that in practice there would be any need to compute Wasserstein distances. In our numerical examples the discretization points are chosen as the intersection of some region $\Omega\subset\R^d$ with a uniform lattice $h\Z^d$ of mesh size $h>0$. One then just needs to pick a random occupied lattice point and updates a random component by $\pm h$. In more sophisticated discretizations like the adaptive one in 3D in \cite{CFM15}, one simply needs to keep a nearest-neighbour list for each discretization point, and make a random choice from this list.

\textcolor{black}{A less stochastic, but slower, variant of 3. would be to generate the best child (in terms of \eqref{eq:PPOne}) among all neighbouring sites $a'$ of the location $a$, or among all children (Wasserstein-1-neighbours) of the parent configuration.}

\subsection{The GenCol algorithm} \label{sec:algo} Based on the results and considerations in the previous sections we propose the following simple algorithm. By an active column we mean a column $\lambda$ for which $(\alpha_I)_\lambda>0$. 
\vspace*{5mm}

\begin{algorithm}[H]
\DontPrintSemicolon
 \KwIn{$N$ (the no. of marginals), $\ell$ (the no. of sites), $\beta$ (hyperpara- meter, chosen to be $5$ in our simulations), $w$ (pair potential), Euclidean coordinates of sites in $\R^d$, marginal $\lambda^*$}
 \KwOut{Solution to the MMOT problem \eqref{eq:ColGenProblemOne}--
\eqref{eq:ColGenProblemThree},\eqref{cost2}, \eqref{A}}
 initialize $A_I$, \, compute $c_I$, \, samples = $0$, \, iter = $0$, \, gain = $-1$ \;
 \While{iter $\le$ maxiter}{
  $\alpha_I$ = solution to RMP \eqref{eq:RMPOne}--\eqref{eq:RMPThree}\;
  $y^*$ = solution to dual problem DRMP \eqref{eq:DRMPOne}--\eqref{eq:DRMPTwo}\;
  \While{gain $\le 0$ and samples $\le$ maxsamples}{
  parent = random active column of $A_I$ \;
  child = new column obtained from parent by randomly moving one particle to a neighbouring site \;
  compute c$_{\rm child}$ (cost of child) using \eqref{cost2} \;
  gain = ${\rm child}^T y_* - c_{\rm child}$ \;
  samples = samples + 1 \;
  }
  $A_I$ = [$A_I$,child], \, $c_I$ = [$c_I$,$c_{\rm child}$] \;
  \If{number of columns of $A_I \,$ $\ge$ $\beta\cdot \ell$}{
      clear oldest $\ell$ inactive columns \;
   }
  iter = iter + 1 \;
 }
 \caption{Genetic column generation ({\tt GenCol})}
 \label{alg:GenCol}
\end{algorithm}
\vspace*{5mm}

The inner while loop generates new columns according to the genetic rule described in section \ref{sec:GenCol} until the acceptance criterion from the pricing problem \eqref{PP'} is satisfied. 

The outer loop is a standard CG iteration in which new columns are added to the current matrix $A_I$ of the restricted master problem (RMP) and the primal and dual solutions are updated. 

To prevent the size of $A_I$ from growing too large, the ``oldest'' inactive columns are cleared whenever a maximum allowed size has been reached. The maximum size is defined with the help of the hyperparameter $\beta$; we do not allow to exceed the minimum size for exactness of the method (namely $\ell$, see Theorem \ref{T:sparse}) by more than a factor $\beta$. The meaning of ``oldest'' is oldest with respect to having been found; the empirical rationale here is that older columns were found with the help of a less accurate dual solution. 
\section{NP-completeness of the pricing problem} \label{sec:NP}
The formula for fast cost evaluation derived in Corollary \ref{C:fastcost} means that the pricing problem \eqref{eq:PPOne}--\eqref{eq:PPTwo} for MMOT is a linearly constrained integer-optimization problem with quadratic objective:
\begin{align}
\label{eq:PPIntOne}
\textrm{Maximize }		& \lambda^Ty^*-\frac{N}{2}\lambda^TC\lambda+\frac{N}{2}\textrm{diag}(C)^T\lambda \\
\label{eq:PPIntTwo}
\textrm{subject to }	& \sum_{i=1}^\ell \lambda_i=N\\
\label{eq:PPIntThree}
                     	& \lambda\in \mathbb{N}^{\ell}_0. 
\end{align}
To derive this form, we have rescaled the objective function and the computational domain by a factor $N$. 

As the theory of NP-completeness evolves around decision problems, we start by formulating a 'decision version' of the pricing problem \eqref{eq:PPIntOne}-\eqref{eq:PPIntThree}.

The objective function of the pricing problem consists of a quadratic and a linear term. The linear term depends on the cost matrix for the quadratic objective. Nevertheless we formulate the \textit{Pricing Decision Problem} by treating both terms as independent. 
\\ \\
\noindent\makebox[\textwidth][c]{
\begin{minipage}{0.8\textwidth}
\textit{PDP: Given natural numbers $N,\ell \in \mathbb{N}$, a cost matrix $V\in \mathbb{R}^{\ell\times\ell}$, a cost vector $a\in \mathbb{R}^{\ell}$},  
\textit{and a threshold $K\in\mathbb{R}$, does there exist a vector $\lambda\in\{0,1,\ldots,N\}^{\ell}$ such that}
\begin{equation*}
\sum_{i=1}^\ell \lambda_i=N	
\end{equation*}
 \textit{and}
\begin{equation*}
\lambda^TV\lambda+a^T\lambda\geq K?	
\end{equation*}
\end{minipage}}
\vspace*{3mm}

Even if we restricted our attention to the choices of input parameters covered by our pricing problem \eqref{eq:PPIntOne}-\eqref{eq:PPIntThree}, we still would be able to establish the NP-completeness of the PDP; see Remark \ref{rem:NPcompleteness} below. 

The remainder of this section is devoted to proving NP-completeness of the PDP. This result strongly calls into question the possibility of a polynomial time algorithm for the PDP. In fact it also calls into question the possibility of a polynomial time algorithm that solves the PP \eqref{eq:PPIntOne}-\eqref{eq:PPIntThree}, by the following argument. Suppose that such an algorithm exists. 
Given an instance of the PDP one is now able to compute the optimal value of the corresponding PP and simply compare it to the threshold of the given instance. So also the PDP would be solvable in polynomial time. 

To prove NP-completeness of the PDP we will use the following elementary lemma, whose proof is included for completeness.

\begin{lem}
\label{lem:NPcompleteness}
Given a natural number $q$, let $E^q\in\mathbb{R}^{q\times q}$ be the matrix whose diagonal entries are equal to zero whereas all off-diagonal entries are equal to one. Then 
\begin{equation}
\label{eq:LambdaTilde}
\tilde{\lambda} \textrm{ with } \tilde{\lambda_i}=1 \textrm{ for all }i\in\{1,\ldots,q\}	
\end{equation}
is the unique maximizer of the problem 
\begin{align*}
\textrm{Maximize }		& \lambda^TE^q\lambda \\
\textrm{subject to }	& \sum_{i=1}^q \lambda_i=q\\
                     	& \lambda\in \mathbb{R}^{q} 
\end{align*}
whose optimal value is therefore given by $q(q-1)$.
\end{lem}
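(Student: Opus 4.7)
The plan is to rewrite the objective in a form that reveals the optimization as a constrained minimization of the Euclidean norm on an affine hyperplane, which has a unique solution by strict convexity.

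First I would observe that, writing $\mathbf{1}=(1,\ldots,1)^T\in\R^q$ and denoting the identity on $\R^q$ by $I$, the matrix $E^q$ admits the decomposition $E^q = \mathbf{1}\mathbf{1}^T - I$. Consequently
\begin{equation*}
  \lambda^T E^q \lambda \;=\; (\mathbf{1}^T \lambda)^2 - \|\lambda\|^2 \;=\; \Bigl(\sum_{i=1}^q \lambda_i\Bigr)^2 - \sum_{i=1}^q \lambda_i^2.
\end{equation*}
On the feasible set $\{\lambda\in\R^q \, : \, \sum_i \lambda_i = q\}$ this reduces to
\begin{equation*}
  \lambda^T E^q \lambda \;=\; q^2 - \|\lambda\|^2,
\end{equation*}
so maximizing the objective is equivalent to minimizing $\|\lambda\|^2$ subject to $\mathbf{1}^T\lambda = q$.

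Next, I would invoke the Cauchy--Schwarz inequality
\begin{equation*}
   q^2 \;=\; \Bigl(\sum_{i=1}^q \lambda_i\Bigr)^2 \;=\; (\mathbf{1}^T\lambda)^2 \;\le\; \|\mathbf{1}\|^2 \|\lambda\|^2 \;=\; q\,\|\lambda\|^2,
\end{equation*}
which gives $\|\lambda\|^2 \ge q$ for every feasible $\lambda$, with equality if and only if $\lambda$ is a scalar multiple of $\mathbf{1}$; combined with the constraint this forces $\lambda = \mathbf{1} = \tilde\lambda$. Substituting back yields the optimal value $q^2 - q = q(q-1)$. Uniqueness also follows directly from strict convexity of the map $\lambda\mapsto \|\lambda\|^2$ on the affine hyperplane $\{\mathbf{1}^T\lambda = q\}$, which would be an equally valid alternative argument.

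There is no real obstacle here: the statement is a standard one-line exercise once one recognizes $E^q = \mathbf{1}\mathbf{1}^T - I$. The only thing worth being careful about is distinguishing the strict from the weak inequality in Cauchy--Schwarz so as to obtain the uniqueness claim for $\tilde\lambda$ as stated in the lemma.
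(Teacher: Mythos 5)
Your proof is correct. It takes a different route from the paper's: the paper diagonalizes $E^q$ explicitly, noting that it has eigenvalue $q-1$ on the span of $(1,\ldots,1)^T$ and eigenvalue $-1$ on the orthogonal complement (spanned by $e_i-e_1$), writes any feasible $\lambda$ in this eigenbasis, and observes that the constraint pins the coefficient along $(1,\ldots,1)^T$ while every other component strictly decreases the objective. You instead use the rank-one identity $E^q=\mathbf{1}\mathbf{1}^T-I$ to reduce the problem, on the constraint hyperplane, to minimizing $\|\lambda\|^2$, and then settle that by Cauchy--Schwarz with its equality case. The two arguments exploit the same underlying structure ($E^q$ is $-I$ plus a rank-one matrix), but yours avoids exhibiting eigenvectors and replaces the spectral bookkeeping with a single standard inequality, which makes it shorter and more self-contained; the paper's version makes the geometry of the spectrum explicit, which is perhaps more transparent about \emph{why} every deviation from $\tilde\lambda$ is penalized. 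Both correctly deliver uniqueness --- you via the equality case of Cauchy--Schwarz (or strict convexity of the norm), the paper via the strict negativity of the second eigenvalue.
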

\begin{proof}
The matrix $E^q$ has the two eigenvalues $q-1$ and $-1$. Corresponding eigenvectors are given by $v_1=\tilde{\lambda}$ as well as $v_2=e_2-e_1, v_3=e_3-e_1, \ldots, v_{q}=e_q-e_1$. Here $e_i$ denotes the $i$-th standard unit vector in $\R^d$. Geometrically, the eigenvector $v_1$ takes us onto the hyperplane our optimization problem is 'living' on. Any further movement corresponds to an addition of a linear combination of the eigenvectors $v_2, \ldots, v_q$ with the negative eigenvalue $-1$, and thereby a decrease of the objective value. In formulas, let us write any admissible trial state $\lambda \in \mathbb{R}^q$ as a linear combination of eigenvectors, 
$\lambda=\sum_{i=1}^q	\alpha_iv_i \mbox{ with }\alpha_1,\ldots ,\alpha_q\in\R$. Multiplication with $(1,\ldots,1)$ immediately shows that -- by admissibility of $\lambda$ -- $\alpha_1=1$ and therefore
\begin{equation*}
\lambda=v_1+\sum_{i=2}^q	\alpha_iv_i.
\end{equation*}
As $v_1$ is perpendicular to $v_2,\ldots,v_{q}$,  
$$
\lambda^T E^q \lambda =v_1^TE^qv_1-\left|\sum_{i=2}^q\alpha_iv_i\right|^2 
				      \begin{cases}
	                     =q(q-1)  &\textrm{if } \alpha_2=\cdots=\alpha_q=0 \\
	                     < q(q-1) & \textrm{else}.\end{cases}	
$$
This establishes the optimality of $v_1=\lambda$.  	
\end{proof}
The main result of this section is:
\begin{thm}
\label{thm:NPC}
The PDP is NP-complete.	
\end{thm}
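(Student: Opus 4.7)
The plan is a standard two-part NP-completeness argument. First I verify $\mathrm{PDP}\in\mathrm{NP}$: a candidate $\lambda\in\{0,\ldots,N\}^\ell$ has encoding length polynomial in the input, the cardinality condition $\sum_i\lambda_i=N$ is linear to check, and the quadratic inequality $\lambda^TV\lambda+a^T\lambda\geq K$ is an $O(\ell^2)$ arithmetic evaluation. Second, I show NP-hardness via a polynomial-time many-one reduction from CLIQUE, the classical NP-complete problem of deciding whether a graph $G$ on $n$ vertices contains a clique of size $q$.

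Given an instance $(G,q)$ of CLIQUE with $G$ on vertex set $\{1,\ldots,n\}$, I construct the PDP instance with $\ell=n$, $N=q$, cost matrix $V$ equal to the symmetric $0/1$ adjacency matrix of $G$ (so $V_{ii}=0$ and $V_{ij}=1$ iff $\{i,j\}\in E(G)$), linear cost $a=0$, and threshold $K=q(q-1)$. The reduction is clearly computable in polynomial time, so the remaining task is to prove that $G$ contains a $q$-clique if and only if there exists an admissible $\lambda$ with $\lambda^TV\lambda\geq q(q-1)$.

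One direction is immediate: if $S\subseteq\{1,\ldots,n\}$ is a $q$-clique, then $\lambda=\mathbf{1}_S$ satisfies $\sum_i\lambda_i=q$ and $\lambda^TV\lambda=2|E(G[S])|=q(q-1)$. The converse is the nontrivial step and is where I expect the main (albeit modest) obstacle: I must rule out the possibility that an adversarial $\lambda$ concentrates multiplicities $\lambda_i\geq 2$ on a small but dense subgraph and thereby exceeds $K$. Let $\lambda$ be admissible with support $S$. Since every entry of $V$ is at most $1$,
\begin{equation*}
    \lambda^T V \lambda \;\leq\; \sum_{\substack{i,j\in S \\ i\neq j}}\lambda_i\lambda_j \;=\; \Bigl(\sum_{i\in S}\lambda_i\Bigr)^{\!2}-\sum_{i\in S}\lambda_i^2 \;=\; q^2-\sum_{i\in S}\lambda_i^2,
\end{equation*}
with equality in the first step iff $S$ induces a clique in $G$. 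Since the $\lambda_i$ restricted to $S$ are positive integers, $\sum_{i\in S}\lambda_i^2\geq\sum_{i\in S}\lambda_i=q$ with equality iff $\lambda_i\in\{0,1\}$ on $S$, i.e.\ $|S|=q$. Chaining these bounds yields $\lambda^TV\lambda\leq q(q-1)$, and reaching the threshold $K$ forces both equalities, so $\lambda$ is necessarily the indicator of a $q$-clique. This is the integer analogue of Lemma \ref{lem:NPcompleteness}: the integrality of $\lambda$, not merely the simplex relaxation, is what is used. Combined with membership in NP, this establishes the theorem.
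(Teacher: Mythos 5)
Your proof is correct, and while the reduction itself is identical to the paper's (CLIQUE instance $(G,q)$ mapped to $\ell=n$, $N=q$, $V=A_G$, $a=0$, $K=q(q-1)$), you handle the crucial converse direction by a genuinely different and more elementary argument. The paper proves a separate spectral lemma (Lemma \ref{lem:NPcompleteness}): it diagonalizes the matrix $E^q$ with all off-diagonal entries equal to one, shows that over the \emph{real} hyperplane $\sum_i\lambda_i=q$ the maximum of $\lambda^TE^q\lambda$ is $q(q-1)$ and is attained uniquely at the all-ones vector, and then pads the support of a certificate $\lambda$ up to a set $I$ of size $K'$ and compares $A_G^I$ with $E^{K'}$ to force all inequalities in a chain to be equalities. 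You bypass the real relaxation, the eigenvalue computation, and the support-padding entirely: the bound $\lambda^TV\lambda\le q^2-\sum_i\lambda_i^2$ follows from $V_{ij}\le 1$ alone, and integrality gives $\sum_i\lambda_i^2\ge\sum_i\lambda_i=q$ with equality exactly for $0/1$ vectors, so attaining the threshold forces $\lambda$ to be the indicator of a $q$-clique. Your route is shorter and exploits precisely the integrality that the decision problem provides; the paper's lemma is doing slightly more work than needed (uniqueness of the real maximizer), at the price of a spectral detour. Both arguments are complete; note only that the paper additionally records (Remark \ref{rem:NPcompleteness}) that hardness persists for the specific cost matrices arising in the pricing problem, which your reduction also supports but which you do not discuss.
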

Before we come to the proof, let us recall what it is that needs to be proven. As discussed for example in \cite{KPP04, CLRS09}, a decision problem Q is classified as NP-complete if it is (i) contained in the class NP and (ii) 'at least as hard' as any other problem in NP. \textcolor{black}{This} class  consists of those decision problems for which 'yes'-instances can be verified in polynomial time. Regarding (ii) we will use the concept of polynomial reduceability. A decision problem S is said to reduce (or transform) to another decision problem T in polynomial time, if there exists a polynomial time function f that maps any instance $I$ of S to an instances $f(I)$ of T in such a manner that $I$ is a 'yes'-instance of S if and only if $f(I)$ is a 'yes'-instance of T. Then as polynomial solvability of T implies polynomial solvability of S, T is considered 'harder' as S. Consequently, in order to prove that Q is NP-complete one needs to show that any problem in NP can be polynomially reduced to Q. As a result of the transitivity of polynomial reduceability, one can show this by picking a known NP-complete problem and proving that it reduces to Q. In the following we will establish the NP-completeness of the PDP, i.e., prove Theorem \ref{thm:NPC}, using this common approach. We will show that the following \textit{Clique Decision Problem} (CDP) reduces to the PDP in polynomial time:
\\[2mm]
\makebox[\textwidth][c]{
\begin{minipage}{0.8\textwidth}
\textit{CDP: Let $G:=(V,E)$ be an undirected graph and $K' \in \mathbb{N}$ be a natural number that fulfils $K'\leq|V|$. Does there exist a clique of size at least $K'$?} 
\end{minipage}}
\\[2mm]
Recall that given an undirected graph $G:=(V,E)$ a clique $C$ corresponds to a subset of the vertices of G, i.e., $C\subseteq V$, such that every distinct pair of vertices $c_1,c_2\in C$ is connected by an edge, i.e., $\{c_1,c_2\}\in E$. We refer the interested reader to \cite{CLRS09} for a more detailed discussion of the CDP including a proof of its NP-completeness. 

\begin{proof}
First, it is easy to see that the PDP is contained in the class NP. Assume that we are given a capacity $N$, a size parameter $\ell$, a threshold $K$, a cost matrix $V$, and a cost vector $a$ that yield the answer 'yes' if used as input arguments of the PDP. Now let $\lambda$ be one of those vectors about whose existence the PDP asks. Then $\lambda$ corresponds to a certificate, whose size is polynomial in the size of the input, for which the capacity as well as the threshold constraint can be checked in polynomial time. Thus overall the 'yes'-instance can be verified in polynomial time.   

Next we prove that the CDP can be reduced to the PDP in polynomial time. A given instance $I=(G,K')$ of the CDP is hereby mapped to an instance $f(I)=(N,\ell,K,a,V)$ of the PDP where $N$ is set to $K'$, $\ell$ corresponds to the cardinality $|V|$ of the vertex set of G, the threshold $K$ is given by $K'(K'-1)$, the cost vector $a$ equals the zero vector and finally the cost matrix $V$ is set to be the adjacency matrix $A_G=(a_{ij})_{i,j=1}^{\ell}$ of the graph $G$ which fulfils
\begin{equation*}
a_{ij}=\begin{cases}
1 & \textrm{if } \{i,j\}\in E\\
0 & \textrm{else.}	
\end{cases}	
\end{equation*}
 Thereby the adjacency matrix is, as usual, based on a given order of the vertices. It is easy to see that for a given instance $I$ the matching instance $f(I)$ can be computed in polynomial time. What remains to be shown is that for the described mapping $I$ is a 'yes'-instance regarding the CDP if and only if $f(I)$ is a 'yes'-instance with respect to the PDP.

Let $I=(G,K')$ be such a 'yes'-instance regarding the CDP. Then $G=(V,E)$ contains a clique $C\subseteq V$ of size $K'$. Assume further $\lambda$ to be the vector indicating which vertices are contained in $C$, i.e., $\lambda\in \{0,1\}^{\ell}$ with  
\begin{equation*}
\lambda_i=\begin{cases}
1 & \textrm{if } i \in C\\
0 & \textrm{else.}	
\end{cases}
\end{equation*}
Then it is elementary to check that the entries of $\lambda$ sum to $K'$ and that $\lambda^TA_G\lambda=K'(K'-1)$. Consequently, the given $\lambda$ triggers a 'yes'-answer of the PDP with respect to the instance $f(I)$. 

Now assume $f(I)$ to be a 'yes'-instance with respect to the PDP. Then there exists a vector $\lambda\in \{0,1,\ldots,K'\}^\ell$ such that its entries sum to $K'$ and it satisfies $\lambda^TA_G\lambda\geq K'(K'-1)$. We will show that this vector $\lambda$ only consists of zero- and one-entries where the latter indicate a set of vertices that forms a clique in the graph G of size $K'$. This immediately determines that $I$ is a 'yes'-instance regarding the CDP. 

In the following, $I\subseteq\{1,2,\ldots,\ell\}$ will denote a set of indices that satisfies 
\begin{equation*}
\lambda_i>0 \textrm{ implies }i\in I	
\end{equation*}
as well as
\begin{equation*}
|I|=K'.	
\end{equation*}
One can generate $I$ by filling up the set of 'support-indices' of $\lambda$ with an arbitrary choice of the remaining indices in $\{1,2,\ldots,\ell\}$. This is possible as by assumption $K'\leq \ell$. The $K'$-dimensional vector consisting only of those entries of $\lambda$ that correspond to indices in $I$ will in the following be denoted by $\lambda_I$. Accordingly, $A_G^I=(a_{ij})_{i,j\in I}$ denotes the matrix that is built from only those rows and columns of $A_G$ with indices in $I$.
\\
By the assumption on $\lambda$ and the construction of $I$ we have
\begin{equation}
\label{eq:NPcompletenessProofOne}
K'(K'-1)\leq \lambda^TA_G\lambda= \lambda_I^TA_G^I\lambda_I.
\end{equation}
Since replacing $A_G^I$ with $E^{K'}$ as defined in Lemma \ref{lem:NPcompleteness} does not decrease the value of the quadratic function for nonnegative input arguments, 
\begin{equation}
\label{eq:NPcompletenessProofTwo}
\lambda_I^TA_G^I\lambda_I \leq \lambda_I^TE^{K'}\lambda_I.
\end{equation}
Combining \eqref{eq:NPcompletenessProofOne} and \eqref{eq:NPcompletenessProofTwo} and applying Lemma \ref{lem:NPcompleteness} for $q=K'$ yields
\begin{equation}
\label{eq:NPcompletenessProofThree}
K'(K'-1)\leq \lambda^TA_G\lambda=\lambda_I^TA_G^I\lambda_I\leq\lambda_I^TE^{K'}\lambda_I\leq K'(K'-1).	
\end{equation}
Every inequality in \eqref{eq:NPcompletenessProofThree} is now actually an equality; consequently $\lambda_I$ equals the vector $\tilde{\lambda}$ from \eqref{eq:LambdaTilde} with $q=K'$, and therefore we also have $A_G^I=E^{K'}$. Since $A_G^I$ corresponds to the adjacency matrix of the subgraph of G induced by $I$, G indeed contains a clique of size $K'$, namely $I$. This concludes the proof of Theorem \ref{thm:NPC}. 
\end{proof}
\begin{rem}
\label{rem:NPcompleteness}
We claim that the PDP is still NP-complete if the cost matrix $V$ and the cost vector $a$ are restricted to be of the special form in \eqref{eq:PPIntOne}-\eqref{eq:PPIntThree}. In order for the PDP to remain NP-complete there need to exist reduced problems - RMP, DRMP - that underlie the choices of $A_G$ and the zero vector as cost matrix $V$ and cost vector $a$. Define $I$ in such a manner that $A_I$ corresponds to the $\ell \times \ell$-identity matrix, and let $C=-\frac{2}{N}A_G$. Then the objective of the pricing problem \eqref{eq:PPIntOne}-\eqref{eq:PPIntThree} takes on the required form. Consequently Theorem \ref{thm:NPC} remains valid if one restricts the choices of cost matrix and cost vector to the ones arising in \eqref{eq:PPIntOne}. 
\end{rem}

%
%
%
%
%

\section{Numerical Results}
\label{sec:numerical}
All our tests were performed for our key motivating application, the Coulomb problem \eqref{MMOT}--\eqref{CoulombCase}. For simplicity we used the regularized Coulomb interaction $w(x,y)=1/\sqrt{\eps^2 + |x-y|^2}$ with $\eps=0.1$.

\subsection{Ten electrons in 1 D with inhomogeneous density}
As a first test we ran the {\tt GenCol} algorithm on problem \eqref{MMOT}--\eqref{CoulombCase} with ten electrons in a 1D interval discretized by $\ell=100$ uniformly spaced gridpoints, for the marginal density shown in Figure \ref{F:density}. We normalized the spacing to $1$ and took the density as a function of the gridpoints $a_i=i\in\{1,\ldots ,\ell\}\subset\R$ to be $\lambda^*(\{a_i\}) = \textcolor{black}{c_0} \cdot ( 0.2 + \sin^2\bigl(\frac{i}{\ell+1}\bigr) )$, \textcolor{black}{where $c_0$ is a normalization constant so that $\sum_i \lambda^*(\{ a_i \} )=1$}. 

\begin{figure}[h]
\begin{center}
\includegraphics[width=0.42\textwidth ]{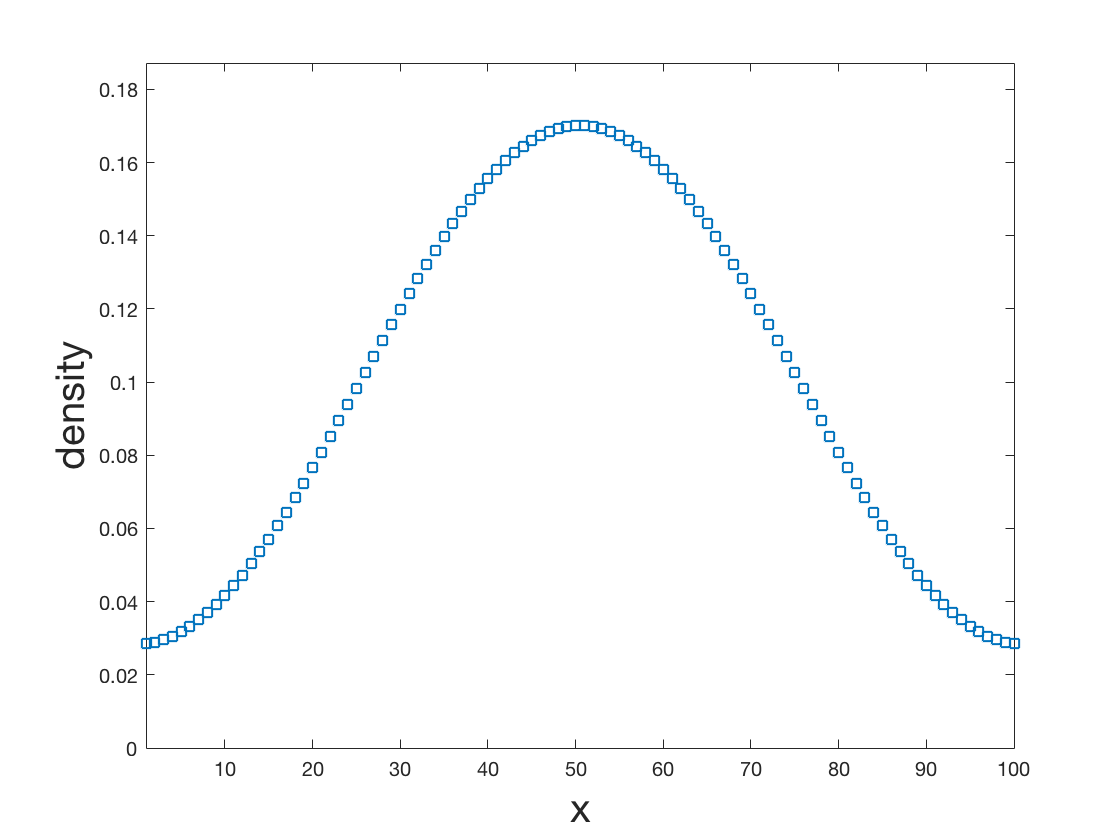}
\end{center}
\vspace*{-4mm}

\caption{Prescribed single-particle density}
\label{F:density}
\end{figure}

We initialized the matrix $A_I$ with the $\ell$ columns of the $\ell\times\ell$ identity matrix (to ensure that the optimization in the RMP \eqref{eq:RMPOne}--\eqref{eq:RMPThree} is feasible) as well as $(\beta - 1)\times \ell$ random columns, each of them obtained by dropping $N$ particles randomly with respect to the uniform measure onto the grid. The results are given in Figure \ref{F:1Di}. After less than $7~000$ iterations the algorithm found what we believe to be the exact solution (within machine precision). Due to the problem size of $4.26\times 10^{13}$ possible columns, rigorous certification of the solution is out of the question, but we tested it both by a long (and, as turned out, futile) non-genetically-biased search for better columns and by re-running the simulation many times, always ending up with the same state. 

The multi-marginal Kantorovich plan (or $N$-point density), visualized via its two-point marginal (or pair density), is seen to concentrate on the graphs of $N-1=9$ maps, thereby accurately reproducing the known behaviour of the continuous problem as predicted by Seidl \cite{Se99} and rigorously proved in \cite{CDD15}. Overall only about 33000 columns out of the $4.26\times 10^{13}$ possible columns were sampled in order to find the ground state solution. The cost decreased steadily at an exponential rate (see Figure \ref{F:1Di_performance}).

\begin{figure}[http!]
\includegraphics[width=0.15\textwidth ]{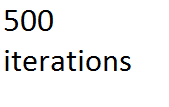}
\hspace*{-0.6cm}
\includegraphics[width=0.45\textwidth ]{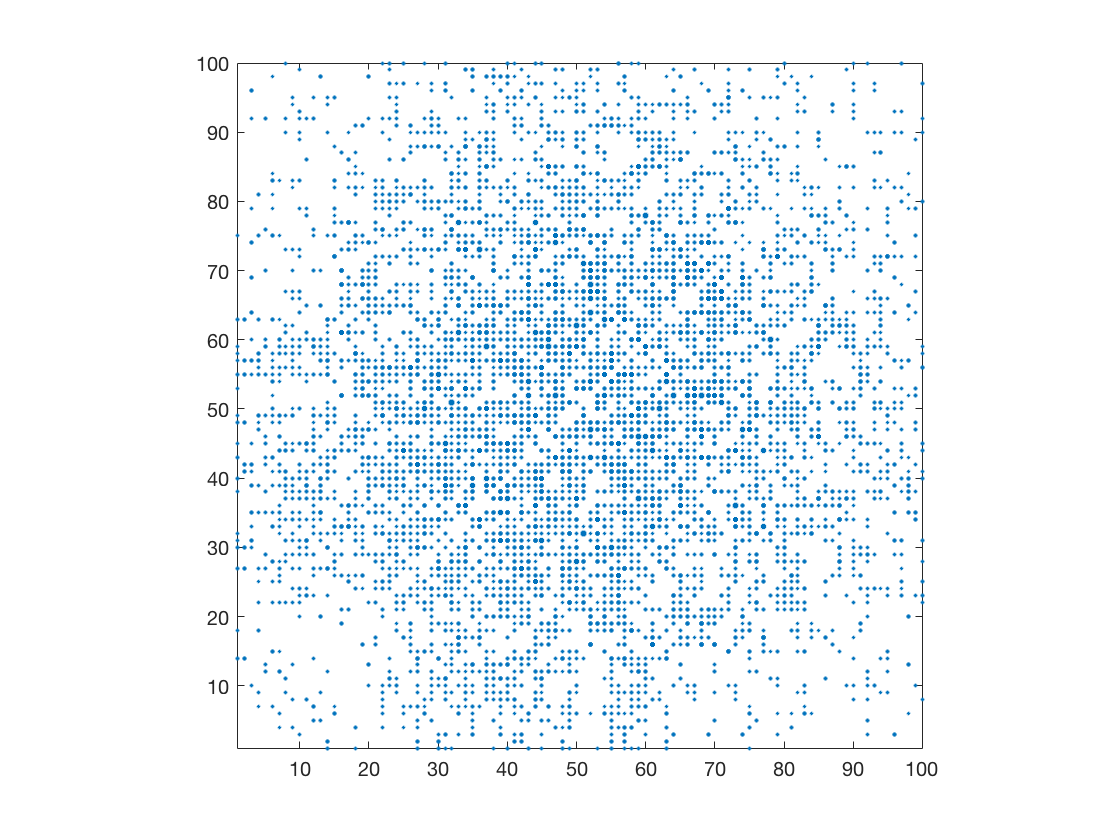} 
\includegraphics[width=0.38\textwidth ]{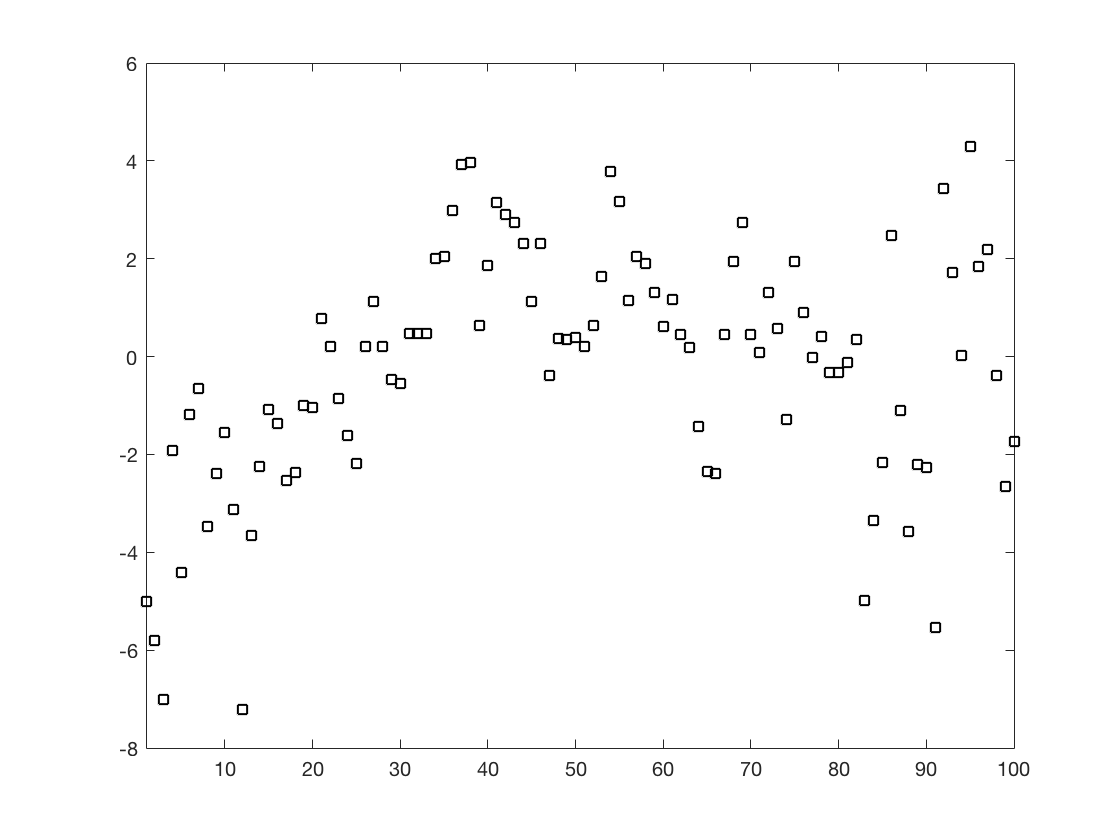} \\
\includegraphics[width=0.15\textwidth ]{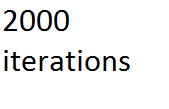}
\hspace*{-0.6cm}
\includegraphics[width=0.45\textwidth ]{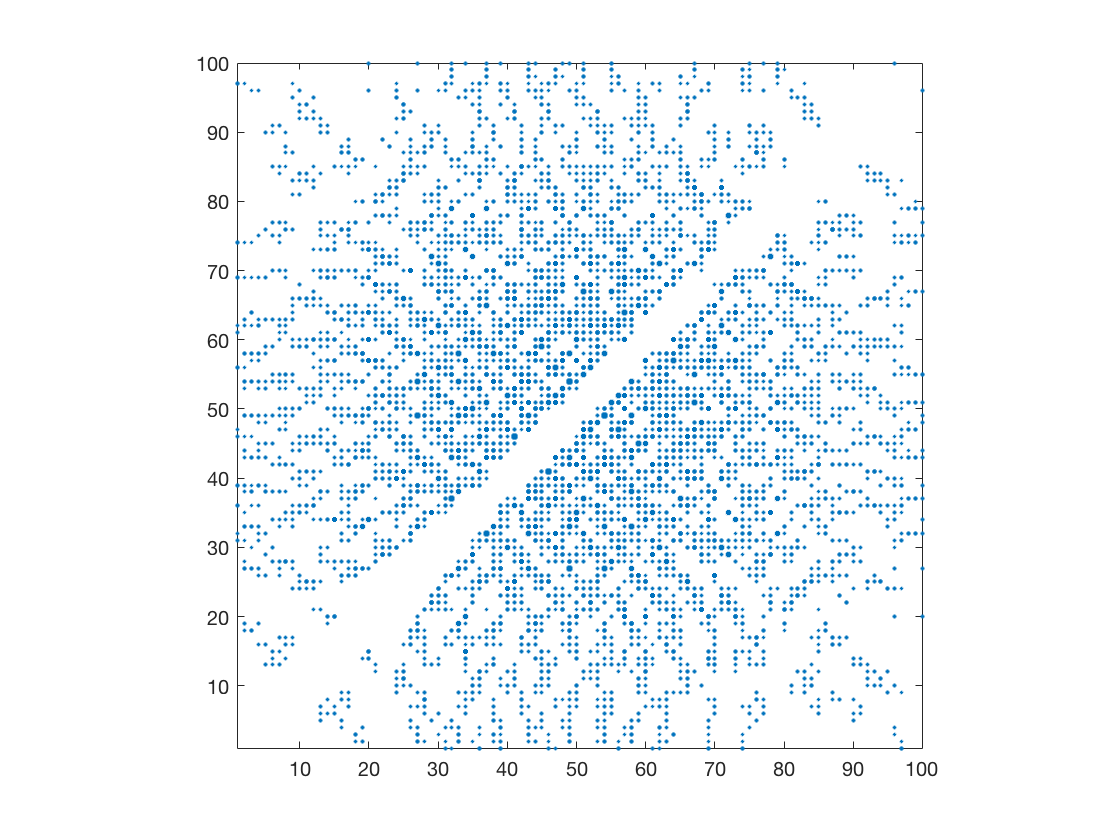} 
\includegraphics[width=0.38\textwidth ]{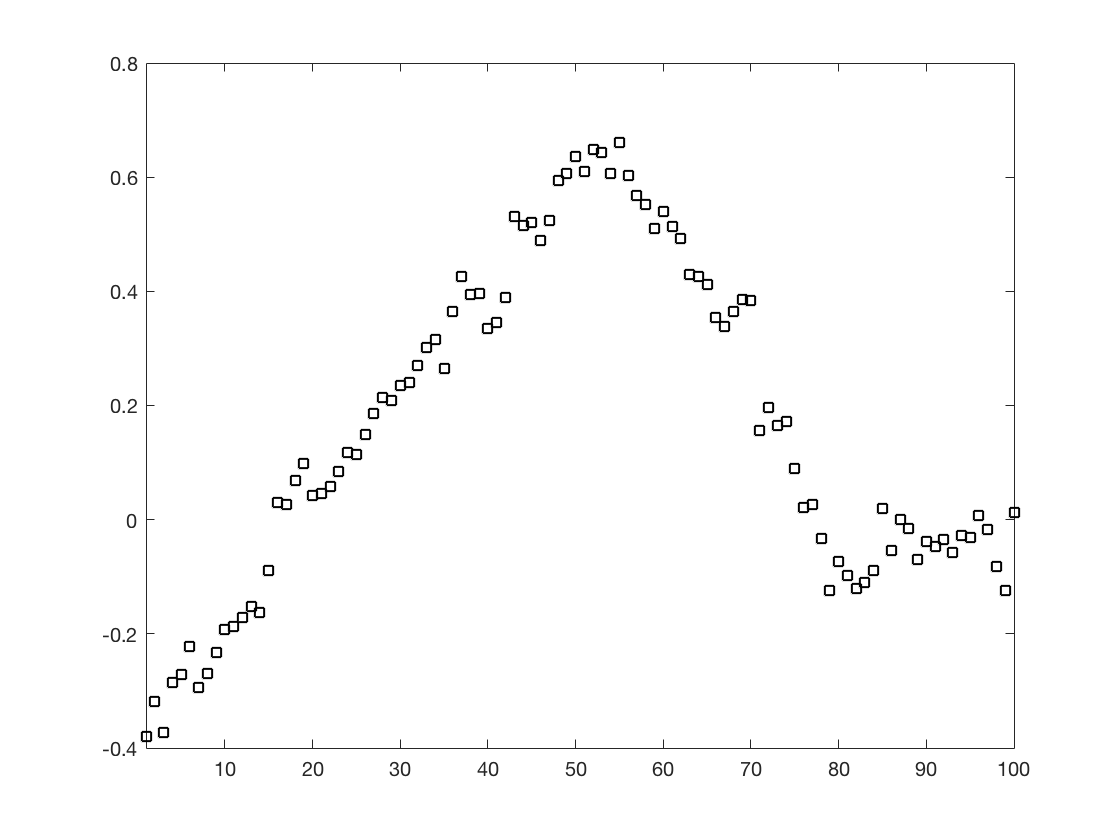} \\
\includegraphics[width=0.15\textwidth ]{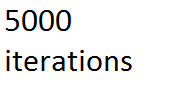}
\hspace*{-0.6cm}
\includegraphics[width=0.45\textwidth ]{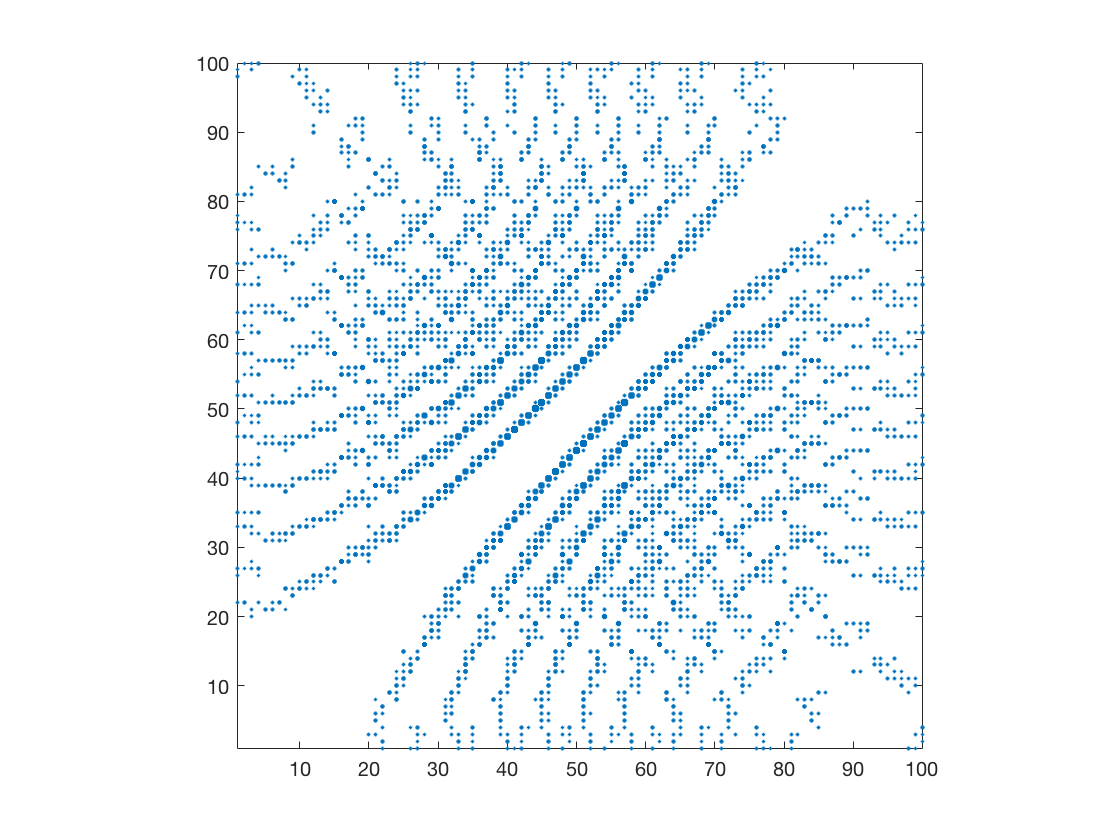} 
\includegraphics[width=0.38\textwidth ]{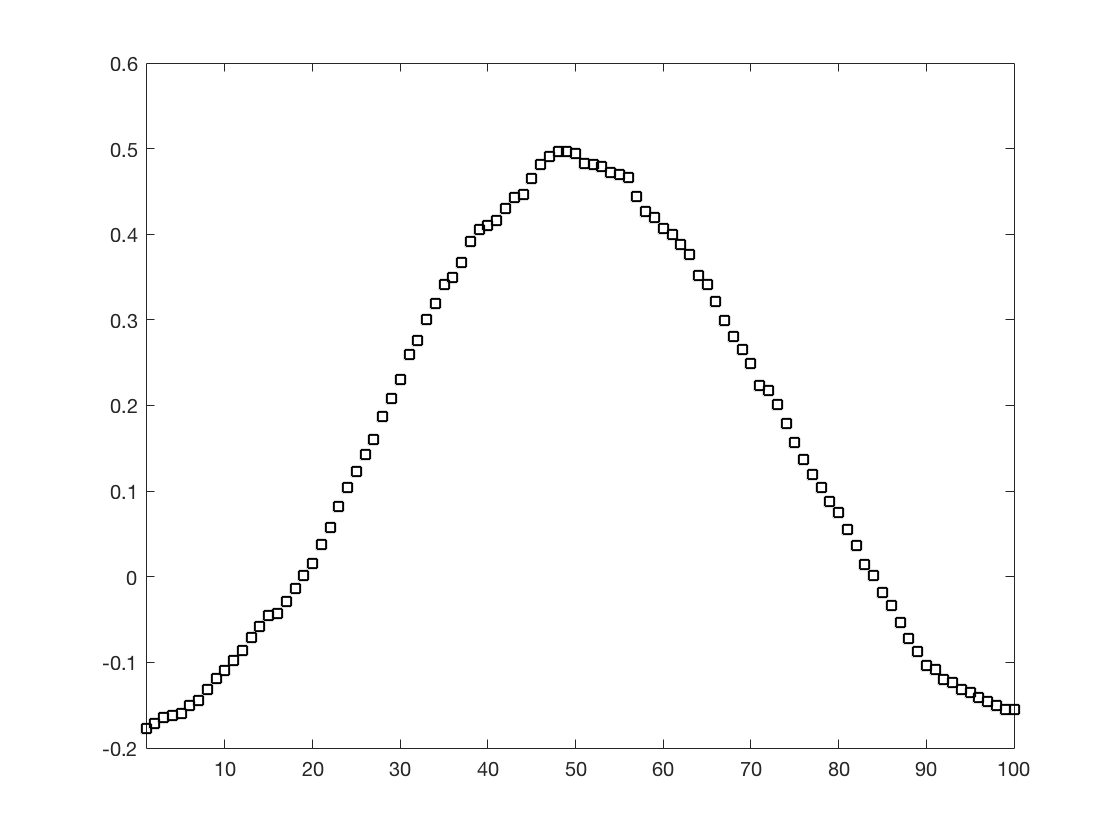} \\
\includegraphics[width=0.15\textwidth ]{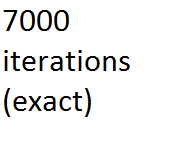}
\hspace*{-0.6cm}
\includegraphics[width=0.45\textwidth ]{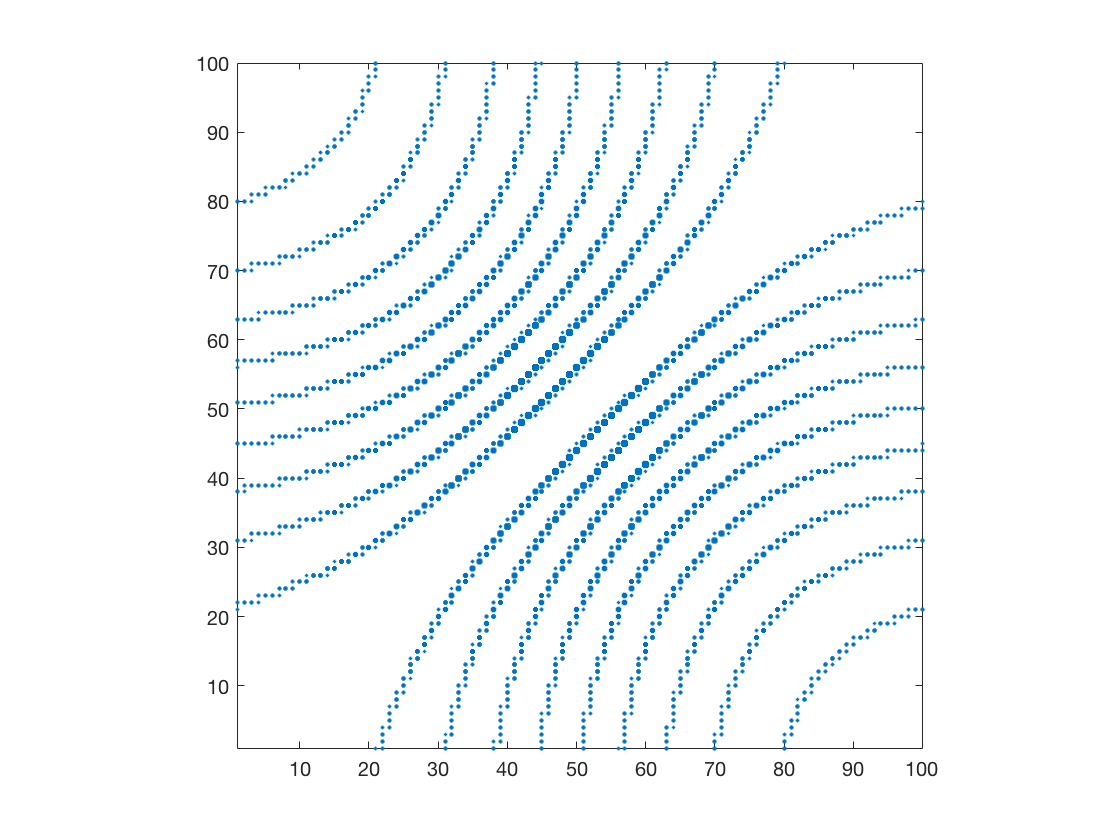} 
\includegraphics[width=0.38\textwidth ]{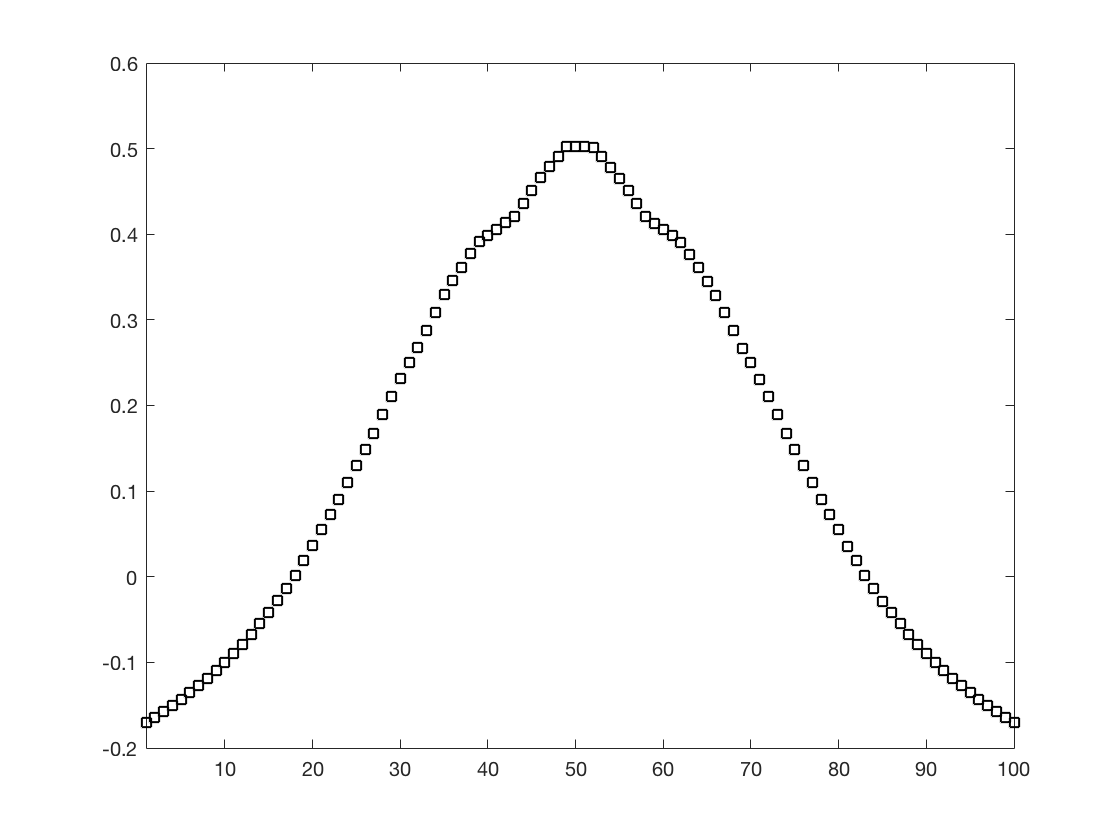}

\caption{Solution to multi-marginal optimal transport with Coulomb cost for $10$ electrons in 1D with the {\tt GenCol} algorithm. The prescribed one-point marginal (single-electron density) is depicted in Figure \ref{F:density}, and was discretized by $100$ gridpoints, resulting in $4.26\times 10^{13}$ unknowns (or 'columns') in the full linear program. {\it Left:} Evolution of the multi-marginal plan, visualized via its two-point marginal (pair density). All gridpoints with nonzero values are shown in blue, with larger markers indicating higher values. {\it Right:} Evolution of the dual solution (Kantorovich potential). The final plan -- believed to be the exact ground state within machine precision -- was found using 6789 iterations (accepted columns) and 33283 samples (genetically generated columns).}
\label{F:1Di}
\end{figure}

\begin{figure} 
\begin{center}
\includegraphics[width = 0.55\textwidth ]{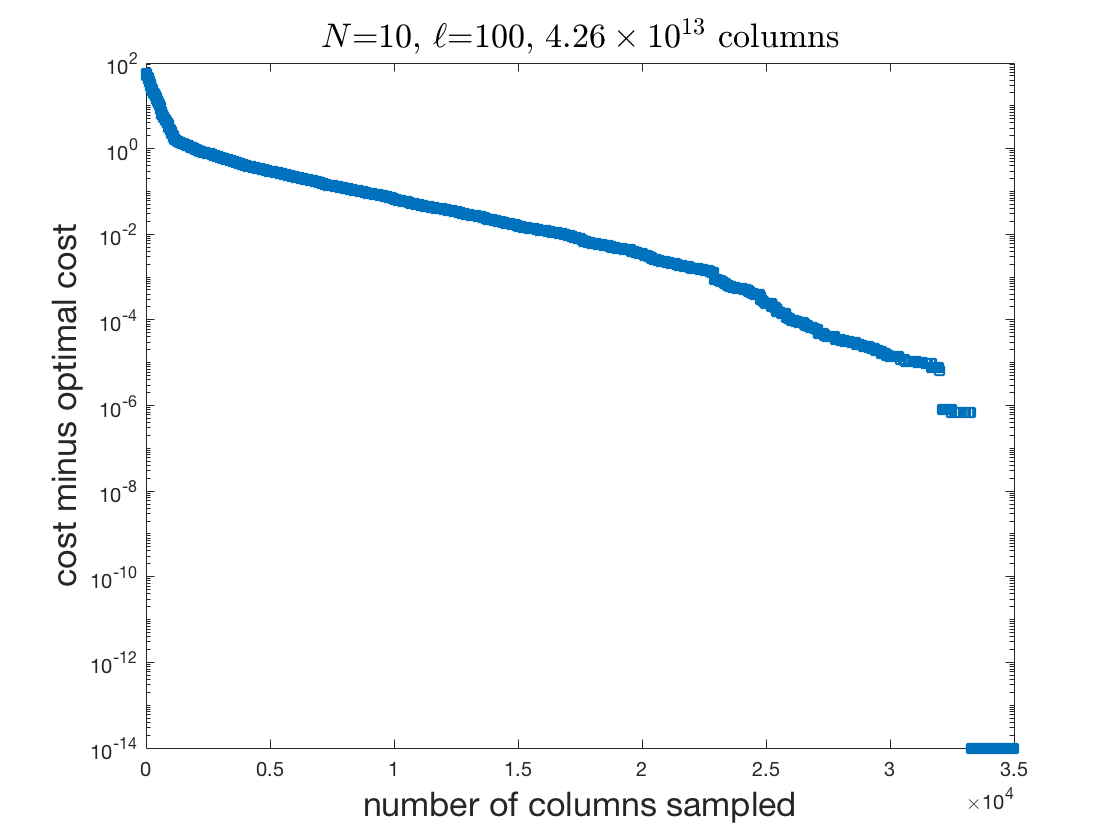}
\end{center}
\caption{Evolution of the cost of the RMP solution, for the $10$-electron simulation from Figure \ref{F:1Di}}  
\label{F:1Di_performance}
\end{figure}

From an unsupervised learning perspective, the Kantorovich potential plays the same role in the {\tt GenCol} algorithm for MMOT as it does in the W-GAN algorithm \cite{ACB17} for learning unknown distributions from data, namely that of an ``adversary''. In the initial stages the adversary is not of much help (it looks close to a random potential) and the primal state has difficulty learning anything other than the -- physically obvious -- fact that two electrons being extremely close is costly. As the number of iterations increases, primal and dual state steadily acquire finer and finer characteristics until reaching optimality. We attribute the success of the {\tt GenCol} algorithm in overcoming the vastness of the space of possible Kantorovich plans to the ability of primal state and dual state to ``learn from each other''. 
\subsection{Large N-electron systems in 1D; cost scaling}
We now empirically investigate the important issue of how the computational cost of the {\tt GenCol} algorithm scales with system size. As a suite of test systems we choose MMOT with Coulomb cost in 1D and homogeneous marginal $\lambda^*$, with an increasing number $N$ of electrons and an increasing number $\ell$ of gridpoints. In fact, it is physically natural to {\it increase both parameters simultaneously} and consider a sequence of systems with
\begin{equation} \label{thermo}
   \mbox{increasing }N, \;\;\; \mbox{increasing }\ell, \;\;\; \frac{N}{\ell}\equiv const.
\end{equation}
In the limit $N\to\infty$, $\ell\to\infty$, $\frac{N}{\ell}\equiv const$ (so-called thermodynamic limit) the system approaches the 1D homogeneous electron gas. At fixed mesh size (normalized to $1$ in our simulations), the condition $\frac{N}{\ell}\equiv const$ means physically that we increase the available volume proportionally to the number of particles, thereby allowing typical interparticle distances to stay unaltered, as happens in large molecules and solids in nature. 

The above family of systems has the advantage that for integer values of $\frac{N}{\ell}$ the exact solution to \eqref{one''}--\eqref{three''} is known even after discretization (or, more precisely, it can be deduced via the same methods with which the exact solution for the continuous theory has been derived in \cite{CDD15}). It consists of the symmetrized Monge state
\begin{equation} \label{thermosol}
   \gamma_{i_1,\ldots ,i_N} = S_N \sum_{i_1=1}^\ell \lambda_{i_1}^* \prod_{k=2}^N \delta_{i_k,i_1 + (k-1)\tfrac{\ell}{N}}
\end{equation} 
which represents a superposition of uniformly spaced $N$-particle configurations. Here  $\delta_{i,j}$ denotes the Kronecker delta function.

We ran the {\tt GenCol} algorithm on the sequence of systems
\begin{equation} \label{eq:thermo}
  \left\{ \!\!\begin{array}{l} N=5 \\  \ell = 20 \end{array} \right. \! , \;\;
  \left\{ \!\!\begin{array}{l} N=10 \\ \ell = 40 \end{array} \right.\! , \;\;
  \left\{ \!\!\begin{array}{l} N=15 \\ \ell = 60 \end{array} \right.\! , \;\;
  \left\{ \!\!\begin{array}{l} N=20 \\ \ell = 80 \end{array} \right. \! , \;\;
  \left\{ \!\!\begin{array}{l} N=25 \\ \ell = 100 \end{array} \right.\! , \;\;
  \left\{ \!\!\begin{array}{l} N=30 \\ \ell = 120 \end{array} \right.\! ,
\end{equation}
with $5$ different runs for each system. We initialized the matrix $A_I$ with the $\ell$ columns of the identity matrix (for feasibility), augmented by $N\cdot \ell$ random columns. In every single case {\tt GenCol} found the exact solution. See Figure \ref{F:1Dh} for the evolution of the Kantorovich plan for $N=25$, $\ell=100$. The number of iterations and genetic samples needed to find the exact solution are given in Table 1.  

\begin{figure}[http!]
\begin{center}
\includegraphics[width=0.48\textwidth ]{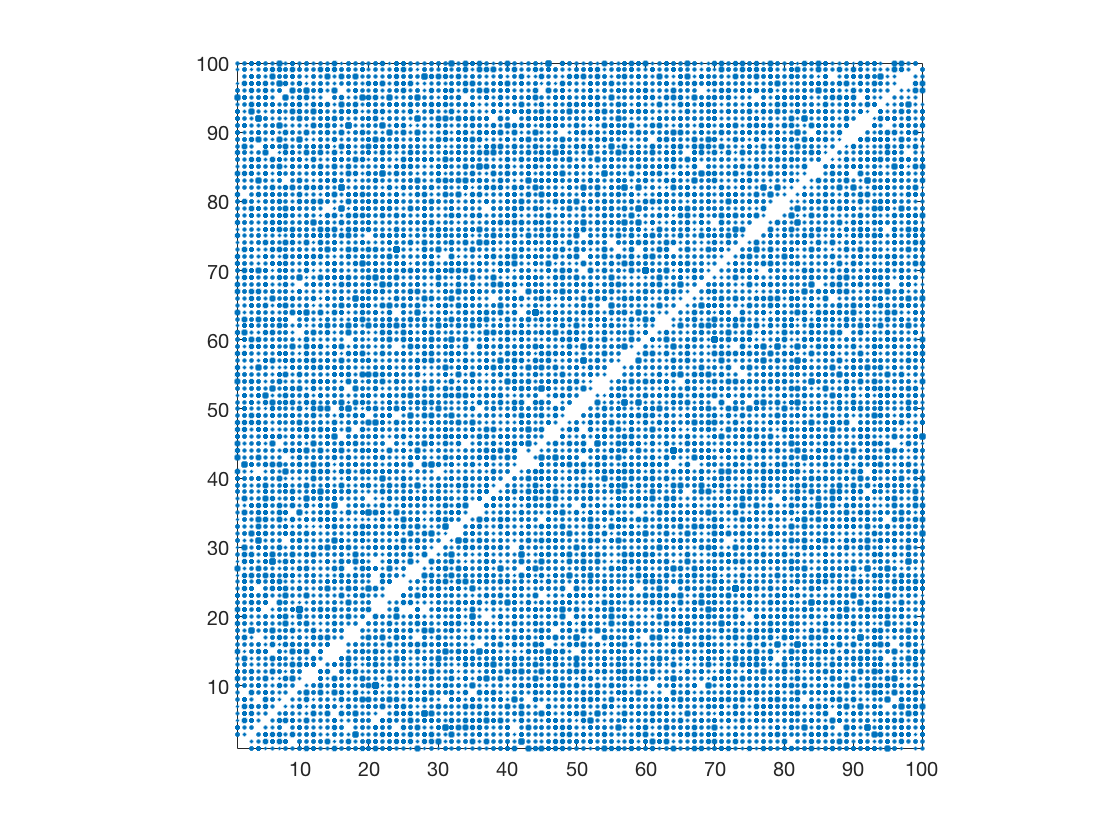} \hspace*{-1cm}
\includegraphics[width=0.48\textwidth ]{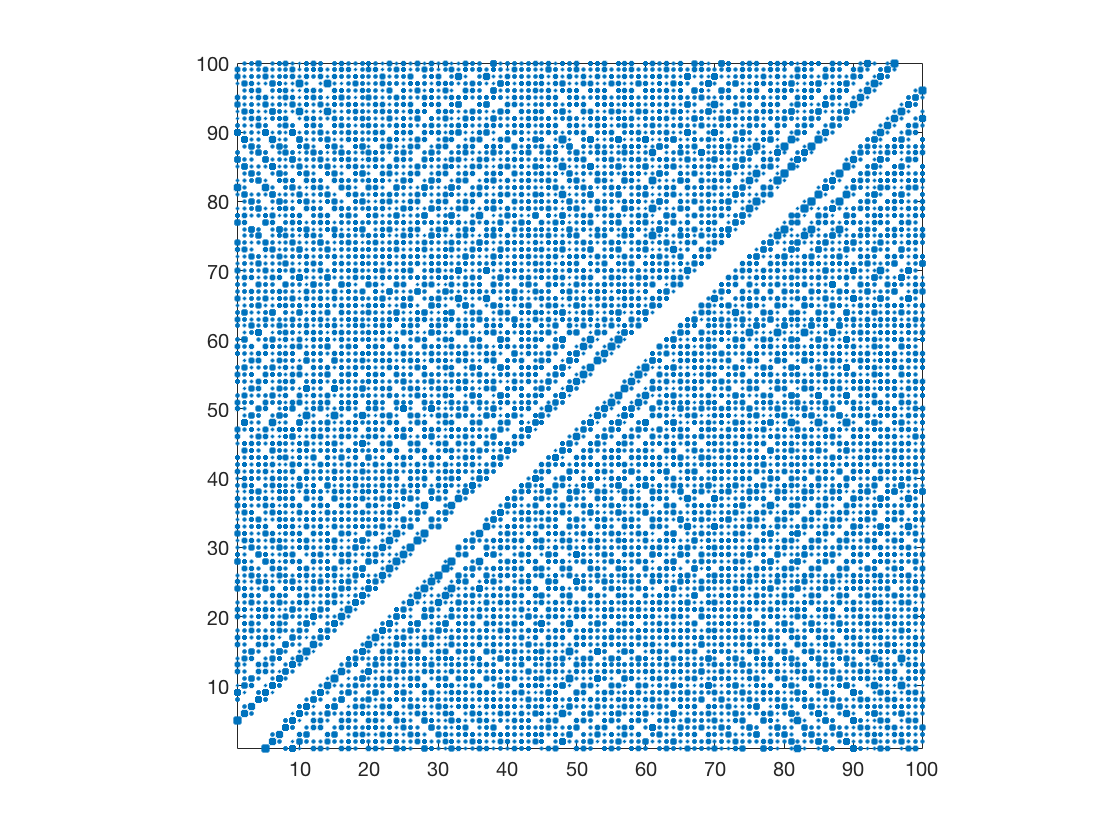} \\
\includegraphics[width=0.48\textwidth ]{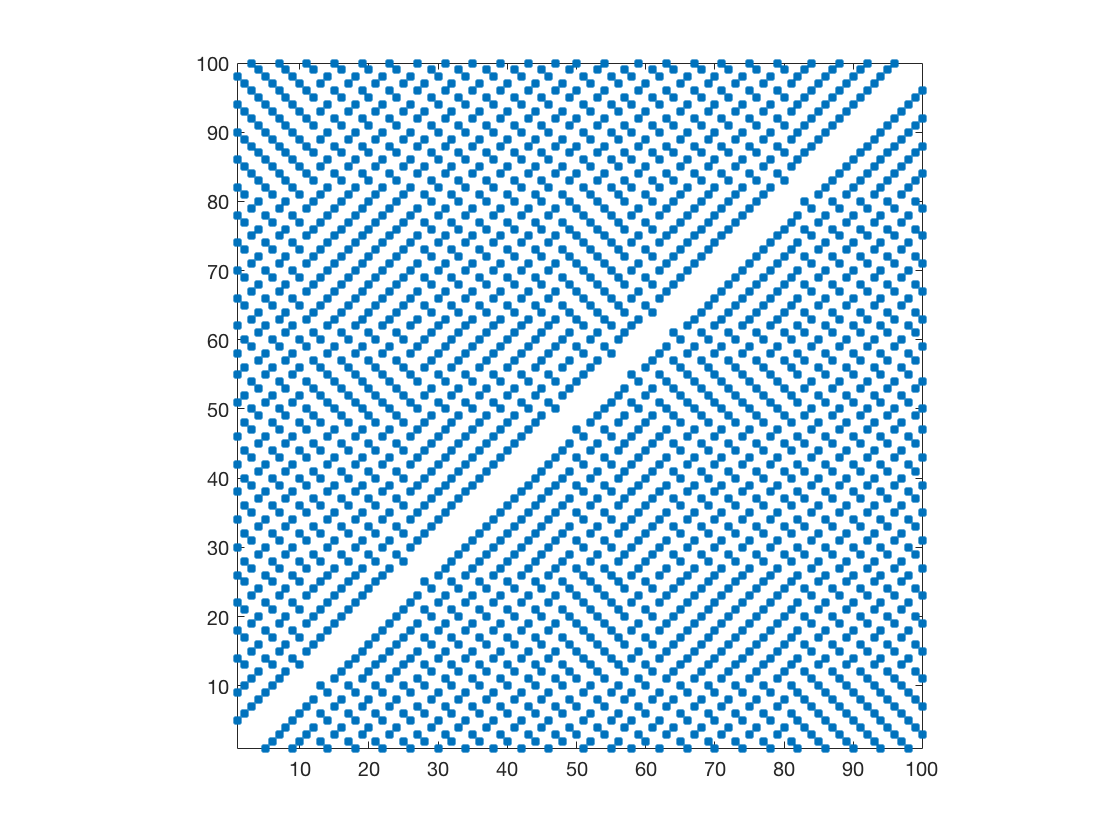} \hspace*{-1cm}
\includegraphics[width=0.48\textwidth ]{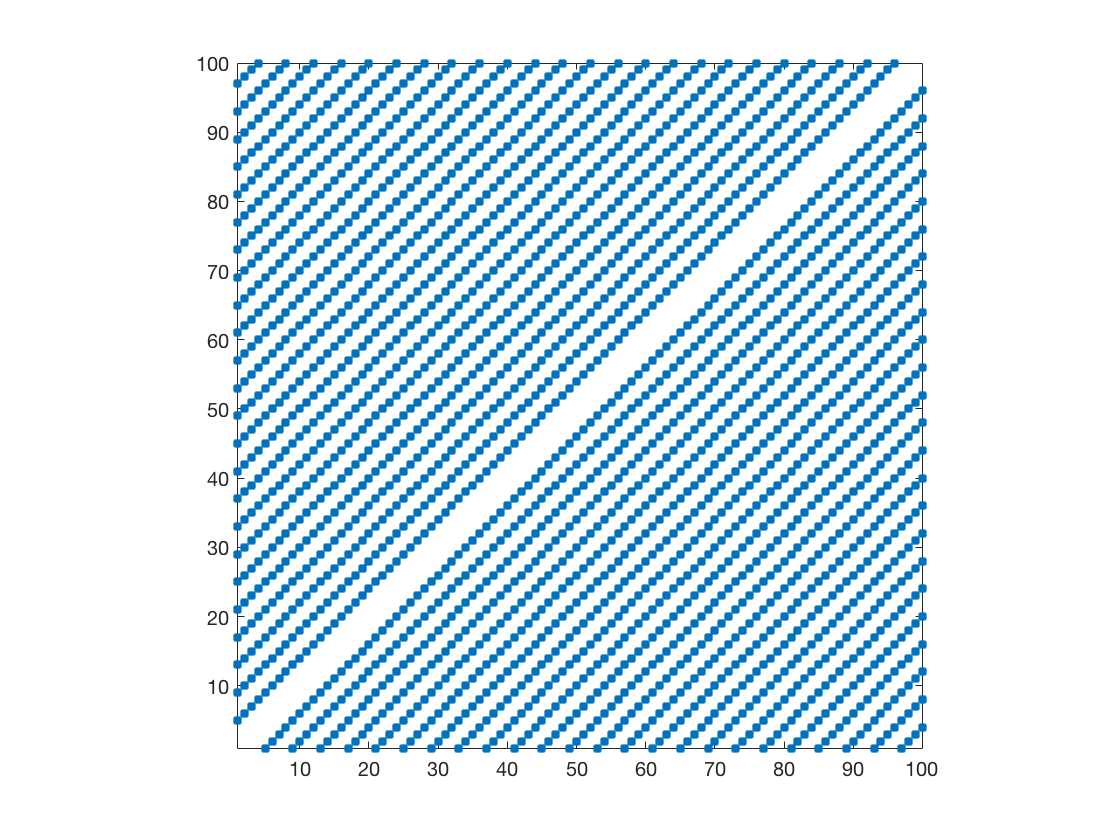} 
\end{center}
\vspace*{-4mm}

\caption{Solution to multi-marginal optimal transport with Coulomb cost for $25$ electrons in 1D with the {\tt GenCol} algorithm, with prescribed homogeneous marginal (single-electron density). The marginal was discretized by $100$ gridpoints, resulting in $1.0404 \times 10^{26}$ unknowns (or 'columns') in the full linear program. Multi-marginal plans are visualized via their two-point marginal (pair density) as described in Figure \ref{F:1Di}. {\it Top left to bottom left:} 1000, 8000, 9000 iterations. {\it Bottom right:} exact solution, reached after 9322 iterations (accepted columns) and 38860 samples (genetically generated columns).}
\label{F:1Dh}
\end{figure}

Since each iteration only involves solving a linear program for at most $\beta\cdot\ell$ unknowns and $\ell$ constraints (where $\beta=5$ in our case), and we limited the number of iterations in the linear programming solver used (Matlab's {\tt linprog}) to $O(\ell^2)$, the key limiting factor is the number of genetic samples needed. Figure 5 shows a log-log-plot of the average number of genetic samples needed for each system. While the system size (i.e., the number of unknowns) grows exponentially, the number of genetic samples needed to find the exact solution appears to lie on a straight line, suggesting polynomial growth only. This is particularly remarkable in the light of our result in section \ref{sec:NP} that the pricing problem -- which our genetic sampling method addresses -- is NP-complete.  

\begin{table}
\begin{tiny} 
\begin{center}
\begin{tabular}{|l|p{20mm}|p{25mm}|p{25mm}|p{25mm}|} 
\hline
System  & total number \newline of columns & accepted \newline columns & sampled \newline columns & sampled columns \newline (average) \\[1mm]
\hline 
$N=5$,             & $4.2504 \times 10^4$  & 101, 121, 116,  & 467, 592, 485, &    511.6       \\ 
$\ell=20$          &   & 146, 117  & 559, 455 &                \\[1mm]
\hline
$N=10$,            & $8.2178 \times 10^9$ & 913, 757, 735  & 3853, 2768, 2872, &    3233.4       \\ 
$\ell=40$          &  & 915, 664  & 3912, 2762 &                \\[1mm]
\hline
$N=15$,            & $1.8240 \times 10^{15}$   & 2575, 2401, 2342,  & 9901, 9301, 9141, &    10024.4       \\ 
$\ell=60$          &  & 2540, 2658  & 9967, 11812 &             \\[1mm]
\hline
$N=20$,            & $4.2879 \times 10^{20}$ & 5649, 5633, 4839,  & 24856, 24227, 20272, &    22898.4       \\ 
$\ell=80$          &  & 5557, 5256  & 22872, 22265 &             \\[1mm]
\hline
$N=25$,            & $1.0404 \times 10^{26}$  & 10611, 9436, 8334,  & 48188, 40939, 31371, &    40017.4       \\ 
$\ell=100$         &  & 10186, 9322  & 40724, 38860 &            \\[1mm]
\hline
$N=30$,            & $2.5759 \times 10^{31}$   & 15539, 14262, 15484,  & 65566, 58283, 75729, &    65068.2       \\ 
$\ell=120$         &  & 15190, 14714  & 63004, 62759 &            \\[1mm]
\hline
\end{tabular}
\end{center}
\end{tiny}
\begin{center}
\caption{Number of iterations (accepted columns) and samples (genetically generated columns) needed by {\tt GenCol} to find the exact ground state for MMOT with Coulomb cost and homogeneous marginal in one dimension. The number $\ell$ of gridpoints was increased proportionally to the number $N$ of electrons in line with \eqref{eq:thermo} (see left column) and 5 different runs were performed for each system.}
\end{center}
\label{T:thermo}
\end{table}

\begin{figure}[http!]
\begin{center}
\includegraphics[width=0.8\textwidth ]{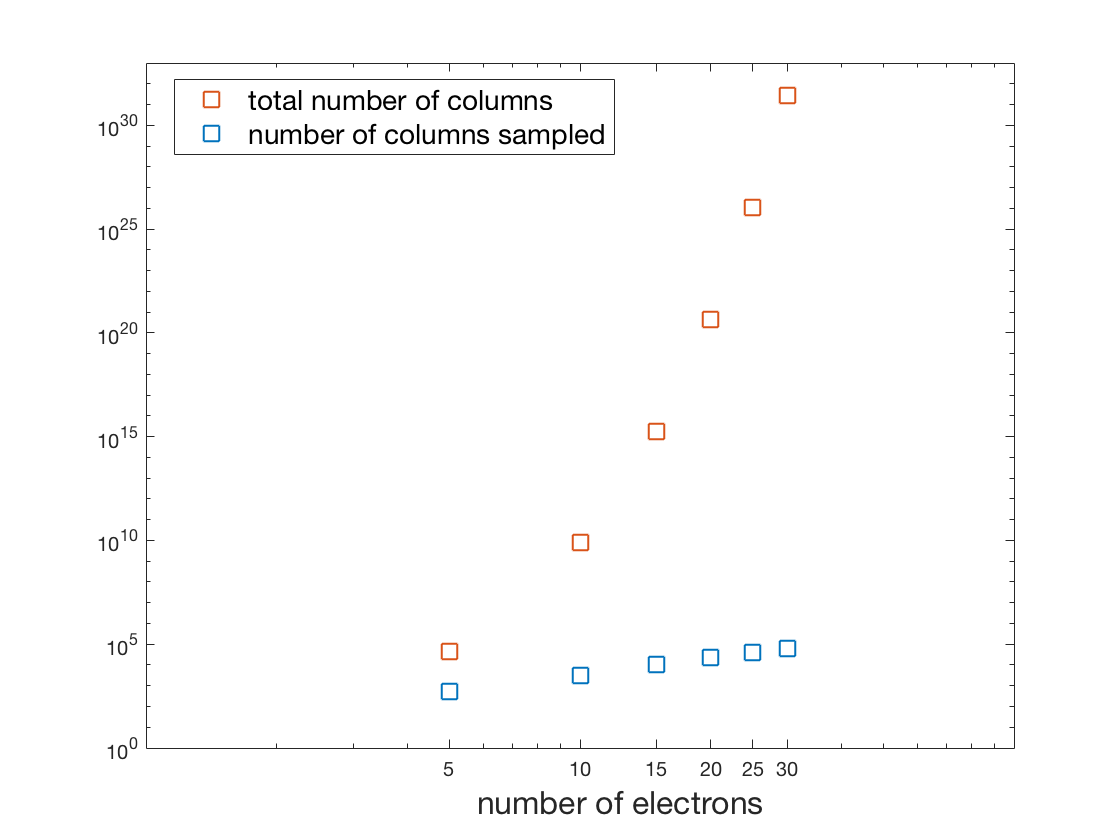}
\caption{Double logarithmic plot of the total number of genetic samples needed by {\tt GenCol} to find the exact ground state versus the number of electrons, for the systems described in Table 1. The plot gives the average number of samples over 5 runs (right column of the table) and the number of gridpoints was increased proportionally to the number of electrons (see the left column of the table).}
\end{center}
\label{F:breakcurse}
\end{figure}

%
%
%
%
\section{Discussion and conclusions}
The main advantage making our algorithm much faster than previous methods appears to be its simplicity: one just needs to solve low-dimensional LPs. Moreover after discretization no further approximations are made and the marginal constraints are automatically maintained, making the solution very accurate. Finally we note that the method also gives the Kantorovich potential, which is needed in applications to electronic structure.  

\begin{footnotesize}
 
\newcommand{\etalchar}[1]{$^{#1}$}

\end{footnotesize}

\end{document}